\newtheorem{theo}{Theorem}[section]
\newtheorem{lemma}[theo]{Lemma}
\theoremstyle{remark}
\theoremstyle{definition}
\def \tr {\operatorname{tr}}
\title{COPT: Coordinated Optimal Transport for Graph Sketching}
\author{
  Yihe Dong\\
  Microsoft \\
  \and
  Will Sawin \\
  Department of Mathematics \\
  Columbia University \\
}
\begin{document}

%\twocolumn[
%\icmltitle{COPT: Coordinated Optimal Transport for Graph Sketching}
\date{}
\maketitle

\begin{abstract}
We introduce COPT, a novel distance metric between graphs defined via an optimization routine, computing a coordinated pair of optimal transport maps simultaneously. This gives an unsupervised way to learn general-purpose graph representation, applicable to both graph sketching and graph comparison. COPT involves simultaneously optimizing dual transport plans, one between the vertices of two graphs, and another between graph signal probability distributions. We show theoretically that our method preserves important global structural information on graphs, in particular spectral information, and analyze connections to existing studies. Empirically, COPT outperforms state of the art methods in graph classification on both synthetic and real datasets.
\end{abstract}

\section{Introduction}
\label{sec-intro}
We introduce a new unsupervised method to measure the distance between a pair of graphs, and apply it to graph sketching. %We explain the distance function first,  \todo{}because the sketching method builds on it.
This distance is based on the general notion of optimal transport distance, which involves minimizing a loss function over transport plans between two distributions \cite{Kontorovich}. However, our distance is defined by minimizing a loss function over \emph{pairs} of simultaneous transport plans, one between the vertices of the two graph and one between distributions defined by the Laplacian spectra of the graphs. This allows us to compare, in a flexible way, large-scale spectral information between the two graphs. Thus, we call it Coordinated OPtimal Transport, or COPT. We show that COPT has desirable properties in theory, as well as empirically demonstrate its usefulness in graph sketching, retrieval, and summarization, on both synthetic and real world datasets. 

Constructing a distance metric between graphs and studying its applications come from a long, rich line of work, due to the ubiquity of graph-strcutured data and the importance of graph sketching and retrieval. We briefly highlight some recent developments in this field, while drawing more detailed connections throughout the text.
%\cite{RonSafroBrandt}  used iterated graph sketching to find optimal orderings of the vertices of a graph. Building on this, \cite{ChenSafro} defined an efficiently computable notion of distance on graphs. \cite{LivneBrandt} used sketching to efficiently solve linear systems involving the graph Laplacian. \cite{DorflerBullo} defined a sketch as the Schur complement of the Laplacian with respect to a subset of the nodes. \cite{ShumanFarajiVandergheynst} generalized multiscale methods to graphs with extra structure on the nodes. \cite{LoukasVandergheynst} simplified graphs by contracting edges in such a way as to preserve the Laplacian spectrum. \cite{HermdorffGunderson} remove edges and merge vertices in a way that minimizes the Frobenius norm of changes in the psuedoinverse of the Laplacian. \todo{more papers}

Sketching is often defined as choosing a sequence of combinatorial operations (e.g. edge contractions) that minimizes a measure of distance between the sketch and the original graph. For instance, \cite{LoukasVandergheynst} contracts edges in such a way as to preserve the Laplacian spectrum. \cite{HermdorffGunderson} removes edges and merges vertices in a way that minimizes the Frobenius norm of changes in the psuedoinverse of the Laplacian.

Sketching has been applied to a number of different graph problems. \cite{RonSafroBrandt}  used iterated graph sketching to find optimal orderings of the vertices of a graph. Building on this, \cite{ChenSafro} defined an efficiently computable notion of distance on graphs. \cite{LivneBrandt} used sketching to efficiently solve linear systems involving the graph Laplacian. Generalizing ideas from electrical engineering, \cite{DorflerBullo} defined a sketch as the Schur complement of the Laplacian with respect to a subset of the nodes. \cite{ShumanFarajiVandergheynst} generalized multiscale methods to graphs with extra structure on the nodes.  %\todo{more papers}
We discuss further connections to related works in \S\ref{sec-prior-work}, as well as provide empirical comparisons to competitive baselines.

Our \textbf{main contributions} are 1) devising a coordinated optimal transport algorithm for computing graph distance; 2) applying COPT to graph sketching, obtaining small sketches that allow for improved graph classfication, insightful visualization, and high quality retrieval. 

This paper is outlined as follows: In \S\ref{s-general}, we review generalities on optimal transport methods for graph comparison and discuss prior work. In  \S\ref{s-COPT}, we define coordinated optimal transport distance and discuss its properties. In \S\ref{s-sketching}, we describe our approach to graph sketching. In \S\ref{s-experiments}, we discuss algorithm implementations and experimental results.

%We make detailed comparison with metrics such as GW  , GOT \todo{also mention \cite{VayerChapelFlameryTavenardCourty} etc}

\section{Graph distances based on optimal transport on vertices}\label{s-general} 

In general, we would like a notion of ``distance" for graphs that satisfies the properties of a metric, and in particular is zero if and only if the two graphs are isomorphic. We would also like this distance to be reasonably computable in practice.

Because no simple complete invariant for graphs up to isomorphism is known, the most natural approach to define a distance for graphs up to isomorphism is to define the distance between graphs $X$ and $Y$ as a minimum, over bijections between the vertices of $X$ and the vertices of $Y$, of some quantity, which vanishes if and only if this bijection sends the edges of $X$ to the edges of $Y$. For instance, we could take the minimum over bijections of the cardinality of the symmetric difference of the edge set. However, there are some downsides to minimizing over permutations.

First, such a distance would be hard to compute, or even approximate, in practice, as it involves a complicated discrete optimization problem.

Second, such a distance would not be defined if our graphs $X$ and $Y$ have different numbers of vertices.

To solve these problems, we can define a graph distance as a minimization over transport plans. To define these, we first \textbf{fix some notation}. Let $X$ be a graph with $N$ vertices and $Y$ a graph with $M$ vertices. We will also use $X$ and $Y$ to denote the set of vertices of $X$ and $Y$ respectively. Optimal transport plans  are functions $P$ from $X \times Y$ to $\mathbb{R}\cup \{0\}$ such that $\sum_{x \in X} P (x,y) = N$ for all $y\in Y$, and $\sum_{y \in Y} P(x,y) = M$ for all $x\in X$. We will define distances as a minimum over transport plans $P$, so their formulations will be analogous to the optimal transport, or Wasserstein, distance, defined as \begin{equation}\label{eq-wasserstein-distance} W_p(X,Y) = \min_{\substack{  P: X\times Y \to R^{+} \\ \sum_{x \in X} P (x,y) = N \\ \sum_{y \in Y} P(x,y) = M}} \Bigl( \sum_{x\in X}\sum_{y\in Y} d(x,y)^p P(x,y) \Bigr)^{1/p}\end{equation} where $d(x,y)$ is a distance function between two points.

\noindent{\bf Gromov-Wasserstein distance.} However, before being able to apply \eqref{eq-wasserstein-distance}, there is no notion of $d(x,y)$ for $x,y$ vertices in two different graphs.

To fix this, M\'{e}moli proposed a notion of Gromov-Wassestein distance for graphs \cite{Memoli}, as  \begin{equation}\label{eq-gromov-Wasserstein} \Bigl(\min_{\substack{  P: X\times Y \to R^{+} \\ \sum_{x \in X} P (x,y) = N \\ \sum_{y \in Y} P(x,y) = M}} \sum_{\substack{  x_1,x_2\in X\\ y_1,y_2 \in Y}} (d_X(x_1,x_2) - d_Y(y_1,y_2))^p  P(x_1,y_1) P(x_2,y_2) \Bigr)^{1/p}. \end{equation} 
In other words, given the distance $d_X$ for two vertices in the same graph, defined as the minimum number of edges in a path connecting them, we have a natural notion of distance between two pairs $x_1,x_2$ and $y_2,y_2$ of vertices on the two different graphs as the difference between the distances of the individual vertices.

A generalization of this definition to arbitrary functions $L$, along with computational methods and applications, was provided by \cite{PeyreCuturiSolomon}, building on computational ideas of \cite{cuturi2013sinkhorn}. An application to word embeddings was given by \cite{Alvarez-MelisJakkola}. A similar approach, but based more closely on Gromov-Hausdorff distance, was due to Sturm \cite{Sturm}. 

\noindent{\bf Graph Optimal Transport.} The recently proposed GOT \cite{MGCP} graph distance uses optimal transport in a different way. This relies on a probability distribution $\mu^X $, the \emph{graph signal} of $X$ \cite{rueHeld, graphSignalDTRF}, over functions on the vertices of $X$. This distribution is a  multivariate Gaussian, with mean zero, whose variance-covariance matrix is a pseudo-inverse $L_X^{\dagger}$ of the Laplacian $L_X$. They then define, in the case $N=M$, a distance for graphs defined by optimal transport of these probability distributions. Let $T: \mathbb R^X \to \mathbb R^Y$ denote a transport plan and $\sigma: X \to Y$ a permutation, \cite{MGCP} defines a distance as
\begin{equation}\label{eq-GOT} W_2 (\mu^X, \mu^Y)^2 = \min_{\substack{  \sigma: X \to Y\\ \sigma  \textrm{ bijective}}} \inf_{ \substack{ T: \mathbb R^X \to \mathbb R^Y \\ T_{\#} \mu^X = \mu^Y }}  \int_{\mathbb R^X} \sum_{ x\in X} (f(x) - Tf( \sigma(x)))^2 d\mu^X (f)  \end{equation} 

\section{Coordinated optimal transport}\label{s-COPT}

Our definition of a new metric on graphs builds on \eqref{eq-GOT}, where we replace the permutation $\sigma$ with an optimal transport plan $P$. Thus, our definition involves two different optimal transport plans: $P, T$, hence named \emph{coordinated optimal transport}. We define our distance $\Delta(X,Y)$ by
\begin{equation}\label{eq-COPT-first} NM \Delta(X,Y)^2 =  \min_{\substack{  P: X\times Y \to R^{+} \\ \sum_{x \in X} P (x,y) = N \\ \sum_{y \in Y} P(x,y) = M}} \inf_{ \substack{ T: \mathbb R^X \to \mathbb R^Y \\ T_{\#} \mu^X = \mu^Y }}  \int_{\mathbb R^X} \sum_{ x\in X} \sum_{y \in Y}  (f(x) - Tf(y))^2 P(x,y)  d\mu^X (f)  .\end{equation} 
Again, we take $\mu^X$ to be a Gaussian with mean zero and variance-covariance matrix $L_X^\dagger$. In the special case that $N=M$ and $P$ is a permutation, this definition reduces to the definition in \cite{MGCP}, up to a normalization factor of $\sqrt{N}$. COPT is more general and can be used between graphs of different cardinalities and for sketching.

\subsection{Properties of COPT}

We give an analytic formula for computing COPT distance $\Delta(X,Y)$, and show $\Delta(X,Y)$ is a metric. See the supplementary material for full proofs.

\begin{lemma}\label{min-formula} Let $X$ and $Y$ be graphs with vertex sets of size $N$ and $M$ respectively. Then
\begin{equation}\label{eq-min-formula} \begin{split} & \inf_{ \substack{ T: \mathbb R^X \to \mathbb R^Y \\ T_{\#} \mu^X = \mu^Y }} \int_{\mathbb R^X} \sum_{ x\in X} \sum_{y \in Y}  (f(x) - Tf(y))^2 P(x,y)  d\mu^X (f)\\  =&  M \tr (L_X^\dagger) + N \tr (L_Y^{\dagger})  - 2\tr (   ((L_Y^{\dagger})^{1/2} P^T L_X^{\dagger} P (L_Y^{\dagger})^{1/2})^{1/2}) \end{split} \end{equation} where $P$ is the matrix with entries $P(x,y)$.
\end{lemma}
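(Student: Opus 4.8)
The plan is to reduce the infimum over transport maps $T$ to a finite-dimensional optimization over linear maps, then evaluate it explicitly using properties of Gaussians and a Cauchy–Schwarz-type trace inequality. First I would argue that it suffices to take $T$ linear. Since $\mu^X$ is a centered Gaussian with covariance $L_X^\dagger$ (supported on the image of $L_X$, i.e.\ the orthogonal complement of the all-ones vector), and $\mu^Y$ is likewise Gaussian, any map pushing $\mu^X$ forward to $\mu^Y$ can be replaced by a linear one without increasing the quadratic cost: expand the integrand $\sum_{x,y}(f(x)-Tf(y))^2 P(x,y)$, note it is a sum of a term depending only on $f$ (whose integral is $M\,\tr(L_X^\dagger)$ since the $y$-marginal of $P$ is $M$), a term depending only on $Tf$ (whose integral is $N\,\tr(L_Y^\dagger)$ since $T_\#\mu^X=\mu^Y$ and the $x$-marginal is $N$), and a cross term $-2\sum_{x,y}f(x)Tf(y)P(x,y)$, which is linear in $Tf$; by Jensen / conditioning, replacing $Tf$ by its conditional expectation given $f$ (which is linear in $f$ for jointly Gaussian data) can only help, and the constraint $T_\#\mu^X=\mu^Y$ is preserved on the level of covariances. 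So write $T$ as a matrix; the constraint becomes $T L_X^\dagger T^T = L_Y^\dagger$ (on the relevant subspaces).

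Next I would compute the cross term for linear $T$. We have $\int f(x) (Tf)(y)\, d\mu^X(f) = (L_X^\dagger T^T)_{x,y}$, so the cross term equals $-2\sum_{x,y} P(x,y) (L_X^\dagger T^T)_{x,y} = -2\,\tr(P^T L_X^\dagger T^T) = -2\,\tr(T P^T L_X^\dagger)$. Thus the objective is
\begin{equation*}
M\,\tr(L_X^\dagger) + N\,\tr(L_Y^\dagger) - 2\,\tr(T P^T L_X^\dagger)
\end{equation*}
to be minimized over $T$ with $T L_X^\dagger T^T = L_Y^\dagger$. Equivalently, maximize $\tr(T P^T L_X^\dagger)$. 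Writing $A = (L_X^\dagger)^{1/2}$ and $B=(L_Y^\dagger)^{1/2}$, and substituting $T = B U A^\dagger$ for a partial isometry $U$ (so that the covariance constraint holds), the quantity to maximize becomes $\tr(B U A^\dagger P^T A^2) = \tr(U A^\dagger P^T A \cdot A B)$, and after regrouping it reduces to $\tr(U C)$ where $C = A^\dagger P^T A B$, or more symmetrically the problem is to maximize $\tr(U \, B P^T A)$ over partial isometries $U$. By the standard fact that $\max_U \tr(UW) = \tr((W^T W)^{1/2}) = \|W\|_{*}$ (the nuclear norm, achieved by the unitary from the polar decomposition of $W$), we get $\max = \tr((A P B^T B P^T A)^{1/2}) = \tr((A P L_Y^\dagger P^T A)^{1/2})$. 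Since $\tr((A P L_Y^\dagger P^T A)^{1/2}) = \tr((L_X^\dagger{}^{1/2} P L_Y^\dagger P^T L_X^\dagger{}^{1/2})^{1/2})$ and this equals $\tr((L_Y^\dagger{}^{1/2} P^T L_X^\dagger P L_Y^\dagger{}^{1/2})^{1/2})$ by the identity $\tr((MN)^{1/2})=\tr((NM)^{1/2})$ for positive semidefinite products, we obtain exactly $2\times$ the claimed subtracted term.

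The main obstacle I anticipate is the careful bookkeeping of pseudo-inverses and subspaces: $L_X^\dagger$ and $L_Y^\dagger$ are degenerate (rank $N-1$ and $M-1$), the Gaussians $\mu^X,\mu^Y$ are supported on hyperplanes, and the constraint $T_\#\mu^X=\mu^Y$ only pins down $T$ on a subspace and only up to the covariance identity — so I must be careful that the "substitute $T=BUA^\dagger$" step genuinely parametrizes all admissible $T$ (and that components of $T$ acting on $\ker L_X$ or mapping into $\ker L_Y$ contribute nothing to the cost, which they do not, since $A^\dagger P^T A$ already annihilates those directions). I would also need to justify rigorously the reduction to linear $T$ — the cleanest route is to note that the cost depends on the law of $(f, Tf)$ only through its second moments, that among couplings with prescribed marginal covariances the cross-covariance is a free parameter, and that the optimum over that parameter is attained by a jointly Gaussian (hence linear) coupling, so restricting to linear $T$ loses nothing. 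Once these subspace issues are handled, the remaining steps are the routine Gaussian moment computation and the nuclear-norm variational identity.
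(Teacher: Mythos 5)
Your proposal is correct in outline, but it takes a genuinely different route from the paper's. The paper never expands the quadratic or optimizes over $T$ directly: it defines lifting maps $A:\mathbb R^X\to\mathbb R^{X\times Y}$, $f\mapsto f(x)\sqrt{P(x,y)}$ and $B:\mathbb R^Y\to\mathbb R^{X\times Y}$, $f\mapsto f(y)\sqrt{P(x,y)}$, observes that the weighted cost is exactly the squared Euclidean distance between $A(f)$ and $B(Tf)$, and hence that the infimum is the ordinary $W_2$ distance between the pushforward Gaussians $A_\#\mu^X$ and $B_\#\mu^Y$; it then cites the known closed form for $W_2$ between centered Gaussians and simplifies using $A^TA=MI_N$, $B^TB=NI_M$, $B^TA=P$ together with cyclic invariance of $\tr((\,\cdot\,)^{1/2})$. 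You instead re-derive the content of that Gaussian formula in situ: expand into the two marginal terms plus a cross term, reduce to the cross-covariance of the coupling, and maximize $\tr$ over partial isometries via the nuclear-norm identity. What the paper's lifting trick buys is brevity and modularity — the degenerate-covariance and attainment issues (inf over maps vs.\ couplings, rank deficiency of $L^\dagger$) are absorbed into the cited theorem — whereas your argument is self-contained, makes the optimal $T$ explicit via polar decomposition, but obliges you to carry out exactly the support/pseudo-inverse bookkeeping and the ``linear $T$ suffices'' reduction that you flag. On that last point, your first justification (``replace $Tf$ by its conditional expectation given $f$'') is vacuous since $Tf$ is already $\sigma(f)$-measurable; your second formulation — the cost depends on the coupling only through second moments, the admissible cross-covariances are $K=(L_X^\dagger)^{1/2}C(L_Y^\dagger)^{1/2}$ with $\|C\|_{\mathrm{op}}\le 1$, and the optimal $C$ (a partial isometry) is realized by a deterministic linear map — is the one to keep. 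Also note a small dimensional slip: the cross term should read $-2\tr\bigl(T^T P^T L_X^\dagger\bigr)$ (with $T$ an $M\times N$ matrix), not $-2\tr(TP^TL_X^\dagger)$; this does not affect the rest of the argument.
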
 

\begin{proof}[Proof summary] We extend $\mu^X$ and $\mu^Y$ to distributions on the space of functions on $X \times Y$, in such a way that the infimum we are interested in is exactly the Wasserstein distance between these distributions. Because the extended distributions remain multivariate Gaussians, we can use the known Wasserstein distance formula for multivariate Gaussians \cite{wassGaussian}. By calculating the variance of these extended distributions, and using the cyclic permutation invariance of the traces of powers of a matrix, this reduces to our stated formula. \end{proof}

Using this analytic formula for the minimum over $T$, we can approximate the coordinated optimal transport distance by using gradient descent to handle the minimum only over $P$.

\begin{lemma}\label{COPT-metric} $\Delta(X,Y)$ is a metric on the set of isomorphism classes of finite graphs. \end{lemma}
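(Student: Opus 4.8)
The plan is to verify the four metric axioms, using the closed form of Lemma~\ref{min-formula} for nonnegativity, symmetry, and both directions of the identity axiom, and the definition \eqref{eq-COPT-first} directly for the triangle inequality. Nonnegativity is immediate: in \eqref{eq-COPT-first}, $\Delta(X,Y)^2$ is $\tfrac1{NM}$ times an infimum of integrals of sums of squares, and we take the nonnegative root. For symmetry, $P\mapsto P^T$ is a bijection between the feasible transport plans for $\Delta(X,Y)$ and those for $\Delta(Y,X)$; with $U:=(L_X^\dagger)^{1/2}P(L_Y^\dagger)^{1/2}$, the cross term in Lemma~\ref{min-formula} is $\tr((U^TU)^{1/2})$ for $\Delta(X,Y)$ and $\tr((UU^T)^{1/2})$ for $\Delta(Y,X)$, and these agree because $U^TU$ and $UU^T$ have the same nonzero spectrum. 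Since that objective is continuous in $P$ (the PSD square root is continuous) and the transport polytope is compact, the minimum over $P$ is attained, a fact I use below. Finally, if $\phi\colon X\to Y$ is a graph isomorphism with permutation matrix $\Pi$, then $N=M$, $\Pi^TL_X\Pi=L_Y$, hence $\Pi^TL_X^\dagger\Pi=L_Y^\dagger$ and $\tr L_X^\dagger=\tr L_Y^\dagger$; plugging the feasible plan $P=N\Pi$ into Lemma~\ref{min-formula} makes the cross term $\tr((N^2(L_Y^\dagger)^2)^{1/2})=N\tr L_Y^\dagger$, so the right-hand side vanishes and $\Delta(X,Y)=0$.

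For the converse, suppose $\Delta(X,Y)=0$; I describe the argument for connected $X,Y$ (the disconnected case requires tracking $\ker L=\mathrm{span}\{\mathbf 1_C\}$, spanned by component indicators, and isolated vertices, and is handled similarly with more care). Let $P^*$ attain the minimum over $P$; since $\Delta(X,Y)=0$, the associated infimum over $T$ is $0$, so fix maps $T_n$ with $(T_n)_\#\mu^X=\mu^Y$ and $\int\sum_{x,y}(f(x)-T_nf(y))^2P^*(x,y)\,d\mu^X(f)<1/n$. Whenever $P^*(x,y)>0$ this gives $\int(T_nf(y)-f(x))^2\,d\mu^X\to0$, so if also $P^*(x,y_1)>0$ then $\int((T_nf)(y)-(T_nf)(y_1))^2\,d\mu^X\to0$; but since $T_nf\sim\mu^Y$ this integral equals the $n$-independent constant $(e_y-e_{y_1})^TL_Y^\dagger(e_y-e_{y_1})$, which must therefore be $0$, forcing $e_y-e_{y_1}\in\ker L_Y$ and hence $y=y_1$ for connected $Y$. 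The symmetric statement in $X$ shows $P^*$ is supported on the graph of a bijection $\phi\colon X\to Y$, so $N=M$ and $P^*=N\Pi$ for $\Pi$ the matrix of $\phi$. Then $\int\sum_x(f(x)-T_nf(\phi(x)))^2\,d\mu^X\to0$, and since $f\mapsto(f,\,T_nf\circ\phi)$ couples $\mu^X$ with the $\phi$-relabelling of $\mu^Y$, the squared Wasserstein-$2$ distance between these two centered Gaussians is $0$; hence their covariances agree, $L_X^\dagger=\Pi L_Y^\dagger\Pi^T$, so $L_X=\Pi L_Y\Pi^T$ and $\phi$ is a graph isomorphism.

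The triangle inequality is the heart of the matter. Given $X,Y,Z$ of sizes $N,M,K$, fix pairs $(P_{XY},T_{XY})$ and $(P_{YZ},T_{YZ})$ that are $\varepsilon$-optimal for $\Delta(X,Y)$ and $\Delta(Y,Z)$, and glue them: set $T_{XZ}=T_{YZ}\circ T_{XY}$ (so $(T_{XZ})_\#\mu^X=\mu^Z$) and
\[
P_{XZ}(x,z)=\tfrac1M\sum_{y\in Y}P_{XY}(x,y)\,P_{YZ}(y,z),
\]
which a direct check of row and column sums confirms is an admissible plan for $\Delta(X,Z)$. Write $f(x)-T_{XZ}f(z)=\bigl(f(x)-T_{XY}f(y)\bigr)+\bigl(T_{XY}f(y)-T_{YZ}(T_{XY}f)(z)\bigr)$ and apply Minkowski's inequality in $L^2$ of the glued measure $\tfrac1M P_{XY}(x,y)P_{YZ}(y,z)\,d\mu^X(f)$ on $(x,y,z,f)$. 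The left side, after summing out $y$, is $\int\sum_{x,z}(f(x)-T_{XZ}f(z))^2P_{XZ}(x,z)\,d\mu^X(f)\ge NK\,\Delta(X,Z)^2$ by admissibility; the first right-hand term, summing out $z$ via $\sum_zP_{YZ}(y,z)=K$, equals $\tfrac KM\cdot NM\,\Delta(X,Y)^2=NK\,\Delta(X,Y)^2$ (up to the $\varepsilon$-slack); and the second, summing out $x$ via $\sum_xP_{XY}(x,y)=N$ and then substituting $g=T_{XY}f\sim\mu^Y$, equals $\tfrac NM\cdot MK\,\Delta(Y,Z)^2=NK\,\Delta(Y,Z)^2$. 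Dividing by $\sqrt{NK}$ and sending $\varepsilon\to0$ yields $\Delta(X,Z)\le\Delta(X,Y)+\Delta(Y,Z)$.

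I expect the triangle inequality to be the main obstacle: one has to find the right normalization ($1/M$) in the gluing so that $P_{XZ}$ stays admissible and the unequal sizes $N,M,K$ cancel cleanly, and to set up the Minkowski step against the glued measure so that marginalizing the ``spectator'' vertex in each term reproduces exactly the defining integrals of $\Delta(X,Y)$ and $\Delta(Y,Z)$ — the substitution $g=T_{XY}f\sim\mu^Y$ being precisely what turns the second term into a cost between $Y$ and $Z$. The identity direction $\Delta=0\Rightarrow X\cong Y$ is routine once one restricts to connected graphs, but genuinely requires care about the kernels of the Laplacians (and isolated vertices) in full generality.
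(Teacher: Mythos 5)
Your proof is correct and follows essentially the same route as the paper's: the same gluing $P_{XZ}=\tfrac1M\sum_y P_{XY}(x,y)P_{YZ}(y,z)$ with composed maps and an $L^2$-Minkowski step for the triangle inequality, the same cyclic/spectrum argument for symmetry, and the same compactness-of-the-polytope argument to extract a minimizing $P^*$ for the identity axiom. The only cosmetic difference is that you deduce that the support of $P^*$ is a bijection from $(e_y-e_{y_1})^TL_Y^\dagger(e_y-e_{y_1})=0$ forcing $y=y_1$ on connected graphs, whereas the paper invokes that the unique maximum of each column of $L_X^\dagger$ lies on the diagonal; both are valid under the paper's standing connectedness assumption.
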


\begin{proof}[Proof summary]  We check each axiom from the definition of a metric separately. For each of them, our strategy is based on the corresponding step in the proof that the Wasserstein distance is a metric. Whatever construction must be applied to the transport map or joint measure in the Wasserstein distance proof is applied to both $P$ and $T$ in our proof. For instance, to check the triangle inequality, we compose the transport maps $T$ and also compose $P$ by a matrix multiplication: Given $P: X \times Y \to \mathbb R$ and $Q: Y \times Z \to \mathbb R$, we take $\frac{1}{M}\sum_{y\in Y} P(x,y)Q(y,z): X \times Z \to \mathbb R $. The calculations in each step are similar to, but more intricate than, the calculations in the Wasserstein distance proof.\end{proof}

%Because $\Delta(X,Y)$ is a metric, it can be adapted into many applications that require metric spaces, for instance constructions for approximate nearest neighbor search \cite{andoni2018approximate}.

\subsection{Global information: spectral vs. metric}\label{comparison-GW}

We analyze the graph global structures that COPT preserves, and compare and contrast COPT with the state of the art Gromov-Wasserstein (GW) distance \cite{VayerChapelFlameryTavenardCourty, Memoli, PeyreCuturiSolomon}. The COPT metric and GW metric are both optimal transport metrics for graphs. The main difference is in what information about the graph they emphasize. The GW distance is defined in terms of the metric $d_X$, and so it measures primarily changes to the graph that change the distance function by a large amount, while COPT is defined in terms of $L_X^\dagger$, so it measures primarily changes to the graph that change the eigenvectors of the Laplacian with \emph{small} eigenvalue by a large amount.

To see the difference between these two concepts, consider a graph with two clusters. The distance between a point in the first cluster and a point in the second cluster is determined mainly by the \emph{length of the shortest path} between the clusters. Adding new paths between the clusters will not change the distance much, while lengthening all paths will change the distance drastically. On the other hand, the entries of the matrix $L_X^\dagger$ with row in the first cluster and column between the cluster is determined more by the \emph{number of paths} between these clusters. As we add more and more paths, these entries of $L_X^\dagger$ will get less and less negative,  up until the number of paths between the clusters is almost as large as the number of paths within a cluster. On the other hand, lengthening the paths will affect $L_X^\dagger$ less. %, although again it will still have some effect.

To see the relationship between the graph Laplacian and counting short paths, it is convenient to use the formula: %\[ L_X^\dagger = (D_X - A_X)^{-1} =\] \vspace{-10pt}\[ D_X^{-1} +D_X^{-1} A_XD_X^{-1} + D_X^{-1} A_X D_X^{-1} A_X D_X^{-1} + \dots \] 
\[ L_X^\dagger = (D_X - A_X)^{-1} = D_X^{-1} +D_X^{-1} A_XD_X^{-1} + D_X^{-1} A_X D_X^{-1} A_X D_X^{-1} + \dots \] 
where $A_X$ is the adjacency matrix and $D_X$ is a diagonal matrix whose diagonal entries are the degrees of each vertex. Thus, each entry of the Laplacian pseudoinverse is a formal series counting paths, where for instance the entries of $ D_X^{-1} A_X D_X^{-1} A_X D_X^{-1} $ are a weighted count of paths of length two. (As long as $X$ is connected and not bipartite, this sum converges once we orthogonally project each term onto the complement of the all $1$s matrix, as doing this removes the influence of the all $1$s eigenvector, and then we can use the convergence of the geometric series.)

%Because both metric and spectral information are likely relevant to a practical problem about graphs, it should often be helpful to combine the GW and coordinated optimal transport distances. The combined distance could be seen as similar to the definition by \cite{VayerChapelFlameryTavenardCourty} of a distance between graphs with extra structure, except that the ``extra structure" is the graph signal and thus is determined by the shape of the graph. We demonstrate experimentally that combining the two indeed boosts classification accuracies.

\section{COPT for graph sketching}\label{s-sketching}

{\bf Motivation.} Graph sketching replaces a graph with a structurally similar graph with a smaller number of vertices. Many sketching methods focus on preserving the spectrum of the graph, but the best similarity metric may depend on the task. %Graph sketching sometimes, but not always, involves sparsification (reducing the number of edges while preserving structural similarity).
Graph sketching techniques have wide applications, beyond what is discussed in \S\ref{sec-intro}, it has also been used to reduce computational load and memory footprint \cite{korenSketchCompute}, as part of graph convolution networks to learn a hierarchical scaling of graph representations and reduce overfitting \cite{simonskyGCN, bronsteinGCN, defferrardGCN, brunaGCN},
%distilling the most relevant structural information from graphs, 
and as a key subroutine in graph partitioning \cite{lelandPartition, partitionKarypis, partitionKushnir, partitionDhillon}.

Using the COPT distance function between graphs, we define a method to sketch a graph by reducing it to a low-dimensional matrix, i.e. the sketched Laplacian. The sketched Laplacian preserves key spectral information about the graph. 
%We then attack classification problems for graphs by applying algorithms such as nearest neighbor search to the sketched Laplacians, which is more computationally efficient than applying it to the full graph. 
Given a graph $X$ on $N$ vertices and a target size $M$, we search for the graph $Y$ on $M$ vertices that minimizes our distance function $\Delta(X,Y)$. In theory, this graph would be the $M$-vertex graph that best approximates $X$, and therefore should share many of the same features (e.g. clusters or the lack thereof), but with fewer vertices. %If we have a large number of graphs $X_1,\dots, X_n$ to compare, rather than computing the pairwise distances, it would be faster to reduce each to a smaller graph $Y_1,\dots, Y_n$ and compute the pairwise distances of the $Y_i$. \todo{m}

\noindent{\bf Method.} However, there are two problems with reducing to a smaller graph: 1) this is a discrete optimization problem, and continuous optimization problems are computationally simpler; 2) the number of isomorphism classes of graphs on a small vertex set is relatively small, so smaller graphs cannot preserve much information.

The solution to both these difficulties is to relax the question. Our distance function depends only on the Laplacian $L_Y$ of $Y$. Rather than finding the graph that minimizes the distance, we find the Laplacian $L_Y$ that minimizes the distance. We choose $L_Y$ subject to the conditions typical of a Laplacian matrix - it is symmetric, its off-diagonal entries are nonpositive, and its row and column sums vanish. Alternatively, we can view this as finding the weighted graph $Y$ whose distance to $X$, defined using the weighted Laplacian, is minimized.

Formally, the sketch of the graph $X$ is given by the $L_Y$ which attains the minimum
\begin{equation}\label{eq-sketch-min}  \min_{\substack{L_Y \in M_Y (\mathbb R) \\ (L_Y)_{y_1y_2} = (L_Y)_{y_2y_1} \\ (L_Y)_{y_1y_2} \leq 0 \textrm { if } y_1\neq y_2 \\ \sum_{y_2} L_{y_1y_2} =0 }} \min_{\substack{  P: X\times Y \to R^{+} \\ \sum_{x \in X} P (x,y) = N \\ \sum_{y \in Y} P(x,y) = M}} \left(  M \tr (L_X^\dagger) + N \tr (L_Y^{\dagger})  - 2\tr (   ((L_Y^{\dagger})^{1/2} P^T L_X^{\dagger} P (L_Y^{\dagger})^{1/2})^{1/2})  \right) \end{equation} 
%\begin{equation}\label{eq-sketch-min}  \min_{\substack{L_Y \in M_Y (\mathbb R) \\ (L_Y)_{y_1y_2} = (L_Y)_{y_2y_1} \\ (L_Y)_{y_1y_2} \leq 0 \textrm { if } y_1\neq y_2 \\ \sum_{y_2} L_{y_1y_2} =0 }} \min_{\substack{  P: X\times Y \to R^{+} \\ \sum_{x \in X} P (x,y) = N \\ \sum_{y \in Y} P(x,y) = M}}  M \tr (L_X^\dagger) + N \tr (L_Y^{\dagger})  - 2\tr (   ((L_Y^{\dagger})^{1/2} P^T L_X^{\dagger} P (L_Y^{\dagger})^{1/2})^{1/2}) \end{equation}

In practice, we use a gradient descent algorithm on both $P$ and $L_Y$ simultaneously to find an approximate minimum.

\begin{algorithm}[tb]
   \caption{COPT graph sketching}
   \label{alg:copt}
\begin{algorithmic}
   \STATE{\bfseries Input:} Graph $X$ of size $N$, target sketch dimension $M$
   \STATE{\bfseries Initialize:} $L_X^{\dagger} \leftarrow$ inverse Laplacian of $X$
   \STATE{\bfseries Initialize:} $(L_Y)'$: the $M(M-1)/2$ strict upper triangular entries of $L_Y$, drawn from $\mathcal{N}(0, 1)$ %, drawn from the standard Gaussian
   %$(L_Y)_{y_1y_2}$ for $y_1<y_2$ as random negative number
   %\STATE{\bfseries Initialize:} $(L_Y)_{y_1y_2} \leftarrow (L_Y)_{y_2y_1} $ 
   %\STATE{\bfseries Initialize:} $(L_Y)_{y_1y_1} \leftarrow - \sum_{y_2 \neq y_1} L_{y_1y_2} $
   \STATE{\bfseries Initialize:} $P(x,y)$ for $x\in X, y\in Y$, sampled from Uniform$(1,2)$
   %\REPEAT
   %\STATEInitialize $noChange = true$.
   \FOR{$i=1$ {\bfseries to} n\_iter}
   \STATE{\bfseries Set} $P(x,y) = \text{abs}(P(x,y))$
   \STATE{\bfseries Normalize} $P(x,y)$ by 5 iterations of Sinkhorn-Knopp algorithm
   \STATE{\bfseries Ensure Laplacian properties:} for $y_1<y_2$, $(L_Y)_{y_2y_1} \leftarrow -(L_Y')^2$, $(L_Y)_{y_1y_2}, \leftarrow (L_Y)_{y_2y_1} $, $(L_Y)_{y_1y_1}, \leftarrow - \sum_{y_2 \neq y_1} L_{y_1y_2} $
   \STATE{\bfseries Minimize} objective Eq~\eqref{eq-COPT-first}: 
   \STATE{\bfseries I: } Compute gradient of Eq~\eqref{eq-min-formula} evaluated at $L_Y$ and $P(x,y)$ 
   \STATE{\bfseries II: } Update $L_Y'$,and $P(x,y)$ using gradient
   \ENDFOR
   %\UNTIL{$noChange$ is $true$}
   \STATE{\bfseries Return:} $L_Y$: Laplacian of sketched graph; $P(x,y)$: the transport plan
\end{algorithmic}
\vspace{-2pt}
\end{algorithm}

\subsection{Implementation}

As summarized in Algorithm \ref{alg:copt}, the values of the transport plan $P$ are initially uniformly sampled from the interval $[1, 2]$. At the beginning of each iteration, we normalize $P$ so its row and column sums are equal, by using the Sinkhorn-Knopp algorithm \cite{cuturi2013sinkhorn}. This ensures that $P$ is a transport plan.

$L_Y'$ corresponds to the upper triangular part of $L_Y$, as that entirely determines the Laplacian. $L_Y'$ is initialized from the standard Gaussian. At the start of each iteration, $L_Y$ is obtained by taking its upper triangular part to be $-(L_Y')^2$, then symmetrized, and diagonal terms filled, to ensure it's a Laplacian matrix.

Gradient descent is used to minimize the analytic formulation Eq~\eqref{eq-min-formula}, where $L_Y'$ and $P$ are updated at each step, with the Adam optimizer \cite{adam2015} with a multistep learning rate scheduler that reduces the learning rate multiplicatively at regular intervals. When using COPT to determine the distance between two graphs with Laplacians $L_X$ and $L_Y$, only $P$ is optimized over in each iteration. Our implemention uses PyTorch and one P100 GPU, on a 2.60GHz six-core Intel CPU machine.

%Sketching a $50$-node graph typically converges in under 100 iterations. See supplementary material for a visualization of training progress.

As sanity checks, we confirm that 1) the sketched Laplacian $L_Y$ converges to the original graph's Laplacian $L_X$ when the target sketch dimension is that of the original graph, and 2) the distance converges to 0 when $L_X$ and $L_Y$ are fixed to be equal. The effects of $P$ as a transport plan can be seen from the node labels in sketched graphs in Figure~\ref{fig:summary}. 
%Distance vs Iteration plot!!!  %As well as Ly -> Lx eqal dim!

%Because our sketching algorithm can be used to reduce graphs of varying size to vectors of fixed size, it allows many downstream tasks to be performed efficiently on GPUs, which are well-adapted to performing many similar computations but slow at logic and branching. Algorithms that take advantage of the sparsity of graph adjacency matrices rarely perform well on GPUs because these typically involve branching. %Thus, our sketching algorithm which transforms a large sparse matrix into a smaller dense matrix leads to another efficiency boost. \todo{cite}

%Note that sketching turns the original, perhaps sparse, graph representation into dense one with lower cardinality. Besides the benefits of having uniformized representation mentioned ealier, modern machine learning is often done on GPUs, which has superior performance on dense data.

%Since GPU's are good at number crunching and bad at logic, Uniform representations particularly benefits where computations on sparse vectors 
%The thing is that the GPUs are really (really) good at number crunching and really bad at logic. Working with sparse tensors makes a lot of things much more logic that just number crunching. Unless your matrix is very very sparse ($<1\%$ values), there is no chance that the sparse version is going to be faster.

\subsection{Time complexity}

We estimate the coordinated optimal transport distance by a gradient descent algorithm. The time complexity is given by (number of iterations $\times$ time to calculate each iteration). We are unaware of a general method to estimate the number of iterations needed to converge (in practice $\sim$150 iterations suffice to sketch 50-node to 15-node graphs, and $\sim$1000 iterations to sketch 1000-node to 200-node graphs), so we focus on estimating the time per iteration, where the bottleneck is evaluating \[M \tr (L_X^\dagger) + N \tr (L_Y^{\dagger})  - 2\tr (   ((L_Y^{\dagger})^{1/2} P^T L_X^{\dagger} P (L_Y^{\dagger})^{1/2})^{1/2}) \] and its derivative with respect to $P$. This can be done in matrix multiplication time $O( \max(N,M)^{\omega}) \leq O(\max(N,M)^{2.373})$ \cite{LeGall}. To see this, note that computing the inverse of a matrix can be done in matrix multiplication time, and that these pseudoinverses can be computed by orthogonally projecting onto the complement of the all $1$s vector and then taking a usual inverse. Furthermore, the trace of the square root of a matrix is the sum of the square roots of the eigenvalues, and the eigenvalues can be computed in matrix multiplication time  \cite{eigenvalues-paper}, noting that we do not need the most computationally difficult step (c) of \cite{eigenvalues-paper}, which computes the eigenvectors. Using backpropagation, computing the gradient has the same time complexity as computing the function.

%Graph optimal transport has the same time complexity per iteration as COPT, but requires a greater number of iterations to converge in our experiments.

\subsection{Connections to prior work}
\label{sec-prior-work}
%We can analogize this approach to other areas where a complicated discrete object is reduced to a vector in a small-dimensional, but semantically rich, vector space. For instance, latent semantic analysis (LSA) \cite{LSA} reduces both words and texts consisting of many words to elements of some vector space $\mathbb R^n$. This is done by a singular value decomposition, which can also be viewed as a minimization problem - minimizing the mean squared error of the prediction of the number of occurrences of a given word in a given text by the dot products of their associated vectors. LSA preserves the most important dimensions of linguistic variation between words and texts. Analogously, COPT preserves the most important dimensions of combinatorial and spectral variation between graphs, by solving an appropriate minimization problem.
%We elaborate on connections to prior work.

The closest analogue of COPT is \cite{GOT2}, which also builds on GOT. One difference is that COPT ensures that the mass of the larger graph is evenly distributed over the smaller graph, while \cite{GOT2} allow different vertices to carry different amounts of mass. \cite{GOT2} also does not use the resulting distance for sketching, like COPT does.

%Graph Matching: Relax at Your Own Risk, Lyzinski et al., IEEE Trans. PAMI 2015]

%\cite{lyzinski} defines a distance as a minimum over permutations. The main result of their paper is that a nonconvex relaxation of this optimization problem provides a closer approximation than a convex relaxation, because with high probability the optimal matrix for the convex loss function will be far from a permutation matrix. While the loss function they consider is different, it may be possible to prove a similar result in our setting. 

It is important to distinguish between graph distances defined using convex optimization, such as those defined in \cite{BentoGraphDist}, and nonconvex optimization, such as COPT. Both can be relaxations of optimization problems over permutations. For one natural loss function, \cite{lyzinski} showed that a nonconvex relaxation better approximates the optimum permutation than a convex relaxation - in fact, with high probability, the convex relaxation is not a permutation at all. While the loss function in \cite{lyzinski} is somewhat different from COPT's, we expect the same distinction between convex and nonconvex optimization to apply in our case.

However, the specific distance studied by \cite{lyzinski} is very different from COPT. That distance, which was also used by \cite{xuCarinScalable}, is formally similar to GW, but it uses the adjacency matrices directly instead of the graph metrics $d_X,d_Y$. Thus, the distance between a graph $G$ and $G$ together with one additional edge $e$ will be similar regardless of the location of $e$, while in COPT and GW it will be larger if $e$ connects two clusters that were far apart in $G$. 

The graph signal, as used in COPT, can be compared to a graph embedding. In fact, the graph signal defines an embedding of the set of vertices into a vector space of random variables. This space has coordinates given by the eigenvectors of the graph Laplacian, with the distance along the $i$ coordinate weighted by $\lambda_i^{-1/2}$ (where $\lambda_i$ is the $i$th eigenvalue). This can be compared to the graph embedding based on the first $k$ eigenvectors of the Laplacian, which is equivalent to weighting the first $k$ coordinates by $1$ and all other coordinates by $0$. \cite{NikolentzosGraphSimilarity} used this embedding to define a graph distance. Unlike COPT, this distance is not invariant under changes of coordinates and thus is discontinuous as a function of the adjacency matrix whenever an eigenvalue has multiplicity $>1$.

The first $k$ eigenvalues of the Laplacian were used to measure similarities between graphs by both \cite{liuSpectralCoarsening} and \cite{loukas}, although they were not viewed as coordinates of a graph embedding in those papers. These papers construct a sketch by searching for a smaller graph that optimizes their spectral similarity measures, subject to combinatorial restrictions on the structure of the graph sketch. COPT performs a similar search but without combinatorial restrictions.

Along similar lines but even further from COPT is \cite{LoukasVandergheynst}, which defines a sketch as a purely combinatorial process where randomly chosen edges are contracted. Still, \cite{LoukasVandergheynst} proves upper bounds on the difference between the spectra of the original graph and the sketch with high probability.

Graph embeddings that are not necessarily based on the eigenfunctions of the Laplacian have also been used to compare graphs, such as in the work of \cite{xuCarinEmbedding}, which combines arbitrary graph embeddings with Gromov-Wasserstein distance.

Sketching based on optimal transport was also used in \cite{GargJakkola}, which chooses the sketch to be a subgraph of the original graph, unlike COPT where the sketch is a new graph. Because of this, it can use the usual Wasserstein optimal transport distance as a distanct function. Like the GW distance, this approach preserves largely metric, rather than spectral, information on graphs.

%A similar graph sketching problem was studied by \cite{GargJakkola}, whose solution also uses an optimal transport approach but is otherwise very different. Rather than compressing a graph $X$ to a general graph $Y$, \cite{GargJakkola} specifically forced $Y$ to be a subgraph of $X$ with at most a given number of vertices. Because of this, it defines a optimal transport function directly in terms of the distance on $X$. \cite{GargJakkola} thus chose to find the probability distribution on the vertices of $X$, with at most a given number of vertices in its support, that minimizes the Wasserstein distance to the unifrom distribution on $X$. Like the GW distance, this approach preserves largely metric, rather than spectral, information on graphs.
%\subsubsection{Complexity}
%The runtime for COPT is  $O(n^3)$, where $n$ is the number of nodes, per iteration,
%There (faster eigensystem! )
%find reference for spectral projection,  memory!! Objective is not globally convex,  tried minimizing the expectation like in GOT, but did not yield improvement.

\section{Experiments}\label{s-experiments}
%COPT provides an effective way to construct general graph representations in an unsupervised fashion, 
COPT can be used both for sketching when given one graph, and finding the distance when given two graphs. Here we demonstrate its effectiveness on a variety of tasks: sketching, retrieval, classification, and summarization. Additional experiments, such as using low-dimensional COPT sketches to visualize relations between graphs, and an extensive comparison with GOT \cite{MGCP}, can be found in the supplementary material.
%In below section, when not Stated otherwise, we use the notation: $\{G_X\}$ denotes a dataset of graphs,  unless Stated otherwise, the classes are block-2, block-3, block-4, barabasi, and ,
%Specifically, the search dataset $\{G_X\}$ consists of six classes of 50-node randomly generated graphs, with 15 graphs per class for a total of 90 graphs, and the query set $\{G_Q\}$ consists of 60 randomly generated graphs across the same six classes, with 10 per class. We then apply COPT on $\{G_X\}$ and $\{G_Q\}$ to construct 15-node sketches $\{S_X\}$ and $\{S_Q\}$, respectively. 
%We work with connected, undirected graphs. The ``connected" assumption is unnecessary, but we focus on connected graphs as they are more relevant for most applications. The ``undirected" assumption is necessary because we define the variance-covariance matrix of $\mu^X$ as the pseudoinverse of the Laplacian. A variance-covariance matrix is always symmetric, so the Laplacian must be symmetric. \todo{update or remove?}
%\vspace{-6pt}

\subsection{Graph Sketching}
\label{sec-sketching}

%Sketching is a task for reducing the number of vertices, 
We measure COPT sketching quality and compare with state of the art techniques: OTC \cite{GargJakkola}, an OT-based compression method that uses Boolean relaxations to create a compressed graph that's a \textit{subgraph} of the original graph; variation neighborhood (Variation) \cite{loukas}, a combinatorial optimization approach to graph coarsening; REC \cite{LoukasVandergheynst}, a randomized edge contraction algorithm that preserves the spectrum; algebraic distance (Algebraic) \cite{ChenSafro, RonSafroBrandt}, which contracts edges based on weights calculated using the Jacobi method; affinity (Affinity) \cite{LivneBrandt}, a vertex proximity heuristic; and heavy edge matching (HeavyE) \cite{Dhillon}, an edge contraction algorithm based on the weight of an edge and the degrees of its joining vertices.

We determine the sketching quality by measuring the graph classification accuracy on sketched graphs. Specifically, for each of the benchmark algorithms and each dataset, we first sketch the graphs by a given compression factor, then use 70\% of the sketched graphs to train an SVM with the multiscale Laplacian graph kernel \cite{multiscaleLap, kriegeBZRMD, kernelVishwanathan}, a kernel able to incorporate structural information of neighborhoods in the graph over a range of sizes. Finally we test the classification accuracy on the remaining 30\% of sketched graphs. This is done for both 2- and 4-fold compression.

%\todo{better phrase}
This is done on four benchmark datasets over diverse domains: Proteins \cite{proteinData}, BZR\_MD \cite{kriegeBZRMD}, MSRC\_9 \cite{neumannMSRC}, and Enzymes \cite{enzymeData}. The SVM is trained with parameters found using $3$-fold cross validation on the training set, using a fast approximation of the multiscale Laplacian kernel (using the Nystr{\"{o}}m Method \cite{fastKernel}). Each accuracy measurement is repeated five times. As shown in Table~\ref{tab-acc}, COPT performs competitively across datasets for both 2- and 4-fold vertex reduction. In particular, the fact that COPT performs strongly in the 4X compression case could be due to the fact that COPT achieves a continuous, not discrete, relaxation with a weighted Laplacian. 

%HERE
\begin{table}[t]
 %\vskip 0.1 %this makes the table vertically smaller
  \addtolength{\tabcolsep}{-2.pt} %this code shrinks the space between columns
\centering
\small %this makes the tech smaller, also shrinking space used
\begin{tabular}{|c|c|c|c|c|c|c|c|c|} %put vertical lines here instead of l to get vertical lines, or remove them
\hline
 & \multicolumn{4}{|c|}{2X compression} & \multicolumn{4}{|c|}{4X compression}\\
\hline
   & BZR\_MD & MSRC\_9 & Proteins & Enzymes & BZR\_MD & MSRC\_9 & Proteins & Enzymes \\
  \hline
 OTC     & 60.7$\pm$4.0 & 80.9$\pm$4.5 & 72.8$\pm$.8 & 29.1$\pm$4.6 &64.3$\pm$2.7 & \textbf{84.8$\pm$6.7} & 66.7$\pm$1.8 & 25.2$\pm$2.7  \\ %\\hline
 HeavyE & 61.7$\pm$4.8 & 79.7$\pm$6.3 & 72.3$\pm$3.3 &27.8$\pm$2.3 & 55.0$\pm$4.7 &76.1$\pm$7.9 &  72.2$\pm$2.7 & 24.9$\pm$  1.9  \\ %\\hline
 Variation & 60.2$\pm$4.4 & 75.5$\pm$2.7 & 72.1$\pm$1.2 & 31.7$\pm$1.5 &59.3$\pm$3.2 & 78.5$\pm$3.8 & 72.4$\pm$.75 & 27.7$\pm$2.1  \\ %\\hline
 Algebraic &  57.4$\pm$5.2 & 77.0$\pm$8.9 & 70.1$\pm$2.7 & \textbf{35.1$\pm$2.3} &53.4$\pm$2.5 & 75.2$\pm$6.9 & 69.1$\pm$1.8 & 24.3$\pm$2.7  \\ %\\hline
 Affinity  & 58.5$\pm$5.0 & 80.1$\pm$3.0 & 71.2$\pm$2.5 & 25.2$\pm$1.5 &53.4$\pm$3.5 & 75.8$\pm$6.2 & 70.9$\pm$2.3 & 23.5$\pm$4.0  \\ %\\hline 
 REC       & 60.9$\pm$7.3 & 82.4$\pm$1.9 & 71.1$\pm$1.5 & \textbf{34.7}$\pm$2.4 &54.5$\pm$2.7 & 77.9$\pm$3.7 & 71.5$\pm$1.0 & 28.9$\pm$1.8  \\ %\\hline
 COPT      & {\bf67.6$\pm$4.0} & {\bf86.3$\pm$1.3} & {\bf74.0$\pm$1.3} & 32.2$\pm$3.3 &{\bf68.4$\pm$5.0} & 81.2$\pm$4.8 & {\bf73.7$\pm$1.5} &  {\bf33.1$\pm$4.2} \\ \hline
\end{tabular}

\caption{Graph classification accuracy comparison with state of the art techniques on datasets across diverse domains, when the number of vertices is reduced by 2- and 4-fold. Accuracies reported in \%. }\label{tab-acc}
\vspace{-18pt}
\end{table}

\subsection{Fast Graph Retrieval}
\label{exp:retrieval}
%COPT has the flexibility to sketch graphs of varying cardinalities down to uniform cardinalities. Uniformized graphs can be more easily adapted for downstream tasks such as neural network training without additional processing such as padding, where batching is standard practice. Uniform data fit into contiguous equal-sized blocks of memory, enabling fast retrieval of large-scale data from disk as the access address can be computed as a simple offset.

It is often useful to reduce a set of graphs to the \textit{same number of vertices} rather than by the same compression factor (reducing to different numbers of vertices), such as for fast similarity measure between graphs using a simple $l^1$ or $l^2$ distance; or for neural network training, where batched operations are the norm and common operations such as MLP require the input dimensions be the same across samples. Uniformized dense data can also be processed more efficiently on GPUs than sparse data \cite{gpu-bottleneck}.
%Reducing graphs to equal sizes while preserving structural information is a more natural way to ensure equal sizing than padding all inputs to the largest graph.
%Uniform data also fit into contiguous equal-sized blocks of memory, enabling fast retrieval of large-scale data from disk as the access address can be computed as a simple offset.
 
\noindent\textbf{Synthetic dataset.} We test the quality of COPT sketching to \textit{equal} number of vertices by graph retrieval quality, as judged by accuracy of the class of the nearest neighbor. %Since there is not an objective nearest neighbor for graphs amongst the randomly generated graphs, retrieval quality is judged by accuracy of the class of the nearest neighbor.
Specifically, we take the dataset $\{G_X\}$ and queries $\{G_Q\}$ to be 600 and 180 randomly generated 50-node graphs, respectively, each evenly distributed amongst six classes: random geometric \cite{randGeoGraph}, block-2 %\footnote{stochastic block model consisting of two main communities.}
, block-3, block-4 \cite{blockModel}, Barabasi-Albert \cite{barabasi}, and random regular graphs \cite{randRegGraph}. We vectorize each graph $G$ in two ways: 1) sketch $G$ to 15 nodes with COPT, and flattening the upper triangular part of the sketched Laplacian to obtain a $120$-dimensional vector%\footnote{even though the diagonal terms are redudant, they improve retrieval accuracy.}
, 2) take the spectral projection of $G$'s Laplacian, specifically the eigenvectors corresponding to the three smallest non-zero eigenvalues (the zero eigenvalue corresponds to the constant eigenvector), yielding a $150$-dimensional vector. Smallest eigenvalues are taken as the lower spectrum corresponds to global structure. Given a query graph vector $v_q$, we take its predicted class to be the class of its nearest neighbor, where the distance is determined with $l^1$ distance for COPT sketches, and $l^2$ for spectral projections. %For COPT sketches the $l_1$ distance slightly outperforms $l_2$ distance, likely due to a small number of high-degree vertices overshadowing others when using $l_2$ \footnote{for spectral projections $l_2$ distance yields better accuracy.}. 
%The distances between $v_q$ and the entire dataset are efficiently computed simultaneously using matrix multiplications with broadcasting. 

%% supplement \textbf{Canonicalization.} Since the ordering of the nodes per graph is arbitrary, for vector coordinate-wise comparison to make sense, we canonicalize the ordering of the rows by permuting the rows according to the diagonal term on each row, and subsequently permuting columns to preserve symmetry. In other words, nodes are ordered according to their (weighted) degrees. To canonicalize spectral projection vectors, we order the eigenvectors according to their eigenvalues, and order the entries of the eigenvectors (corresponding to nodes) such that the entries of the least eigenvector are in sorted order.

%As seen in table~\ref{tab:search}, 
\noindent\textbf{Results.} When taking the nearest neighbor's class as the predicted class, COPT achieves $97.8\pm 1.1\%$ accuracy, which is $15.7\%$ higher on average than the spectral projections accuracy of $82.8\pm .6\%$.
%classification accuracy using COPT is $.157$ points higher on average than using spectral projections, and over $2000\times$ faster than GW \footnote{As implemented in \cite{VayerChapelFlameryTavenardCourty}.}, while only trailing $.02$ points behind GW in accuracy.  %\todo{convert to text}
%This is implemented on CPU to be a fair comparison, vector-based search runs even faster on GPU. \todo{reduce}
To illustrate the speed advantage of equi-dimensional sketching, note that retrieval accuracy on COPT sketches trails only $2.0\%$ behind the GW accuracy of $99.8\pm .3\%$ on the \textit{original, non-reduced} graphs, while being $2000$X faster: $1.81\pm .07$ ms compared to $3.69\pm .07$ s. %\footnote{As implemented in \cite{VayerChapelFlameryTavenardCourty}.}.
Thus COPT is ideally suited for situations where fast execution speed is critical, e.g. as a component in a pipeline.

\noindent\textbf{Combining in pipeline.} In practice, a faster, but coarser, algorithm is often used to filter out candidates for a more accurate but time-consuming method \cite{kusnerRetrieval}, so we report accuracies for pipelines that 1) does either $l^1$ retrieval on COPT sketches or $l^2$ retrieval on spectral projections to filter out unlikely candidates, and 2) runs GW on the remaining candidates. 
%retrieval on either COPT sketches or spectral projections with GW, respectively, where the fast algorithm is used to filter out candidates before running GW on the remaining candidates. %As seen in figure~\ref{fig:search_acc}, 
Retrieval using COPT sketched Laplacians significantly outperforms spectral projections of original Laplacians, %, despite requiring 30 fewer coordinates per data point.
For instance, when the fast algorithm is allowed to filter out all but the top 3 candidates, COPT+GW pipeline achieves $98.7\pm 0.3\%$ accuracy, compared with $89.4\pm 2.4\%$ for spectral projections+GW, with comparable timings as the compute bottleneck lies in the GW component of the pipeline. See appendix for the full comparison.

\noindent\textbf{Real dataset.} We also compare reduction to equal dimensions on the real dataset BZR\_MD, with on average $21.3$ nodes per graph. When reduced to $7$ nodes per graph and the upper triangular part of the reduced Laplacian taken as above, COPT achieves $57.2\pm 4.9\%$ accuracy in the class of the nearest neighbor, compared to $52.2\pm5.5 \%$ for OTC \cite{GargJakkola}. Here $100$ of the $306$ graphs in the dataset are sampled to query the remaining $206$ graphs. This is repeated $20$ times. 

%As seen in table~\ref{tab:search}, the  180 queries and 600 datapoints,  $2167\times$ improvement in speed over GW   , which only $.022$ points behind GW in accuracy.
%The average timing for a single     across 600 data points for  is, and  the timing   1.74 ms per query for 600 data points, 120 dimensions,  2.2 ms per query for projections for 600 datapoints, 150 dimensions, and 3.79 s for 600 points  %is $97.8\% \pm .02$ for COPT, and $97.8\% \pm .02$ for spectral projections.
% .9777 575.60it/s  
%SVD top1 cls .7055

\subsection{Graph Summarization}
\label{sec:graph_summary}
%Graph summarization is a useful application for visualizing graphs   .
Figure~\ref{fig:summary} visually demonstrate that COPT preserves the most relevant global structures on graphs, across graphs of varying global structures. The sketched graph visualizations are obtained from Algorithm~\ref{alg:copt} by declaring an entry in the sketched Laplacian $L_Y$ an edge if it lies above a given threshold. The node labels on the sketched graphs are determined using the transport plan $P$, specifically the label on a node contains the two top-weighted nodes in the original graph whose mass flowed into that node. This shows that 1) COPT preserves important global graph structures, and 2) structurally similar nodes in the original graph are sketched to the same or nearby nodes. See supplementary material for more examples.

\begin{figure*}
\centering
\begin{minipage}[b]{.29\textwidth}\centering
\begin{subfigure}
         \centering
         \includegraphics[trim={2cm 1.5cm 1.9cm 1.5cm},clip,width=.97\linewidth]{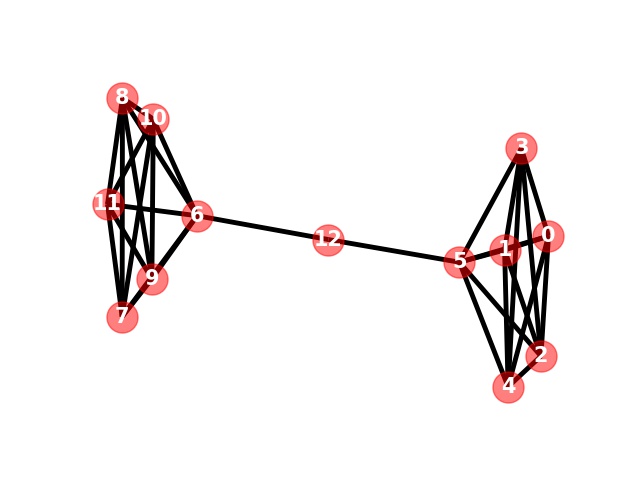}
         %\caption{$y=x$}
         %\label{fig:y equals x}
     \end{subfigure}
     \begin{subfigure}
         \centering
         \includegraphics[trim={2cm 1.5cm 1.9cm 1.5cm},clip,width=.97\linewidth]{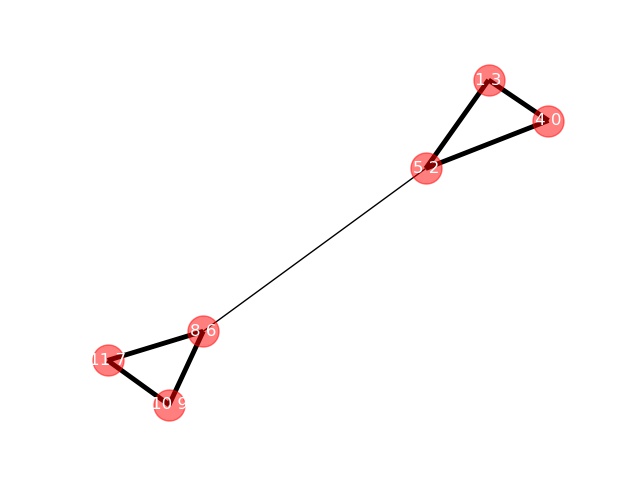}
         %\caption{$y=x$}
         %\label{fig:y equals x}
     \end{subfigure}
\caption{Barbell.}
\end{minipage}\qquad
\begin{minipage}[b]{.29\textwidth}\centering
\begin{subfigure}
         \centering
         \includegraphics[trim={2cm 1.5cm 2cm 1.5cm},clip,width=.97\linewidth]{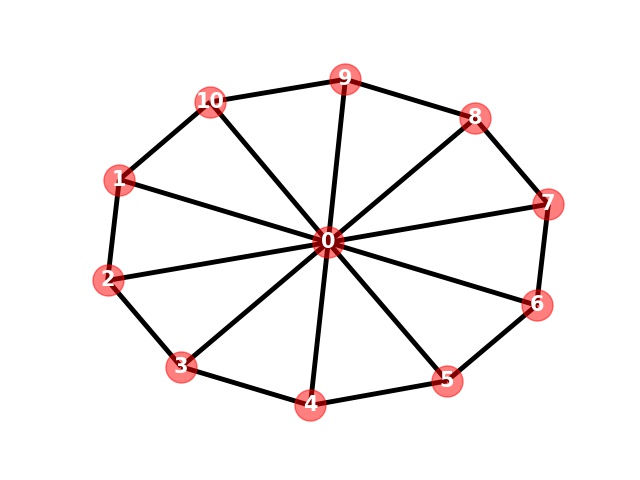}
         %\caption{$y=x$}
         %\label{fig:y equals x}
     \end{subfigure}
     \begin{subfigure}
         \centering
         \includegraphics[trim={2cm 1.5cm 1.8cm 1.5cm},clip,width=.97\linewidth]{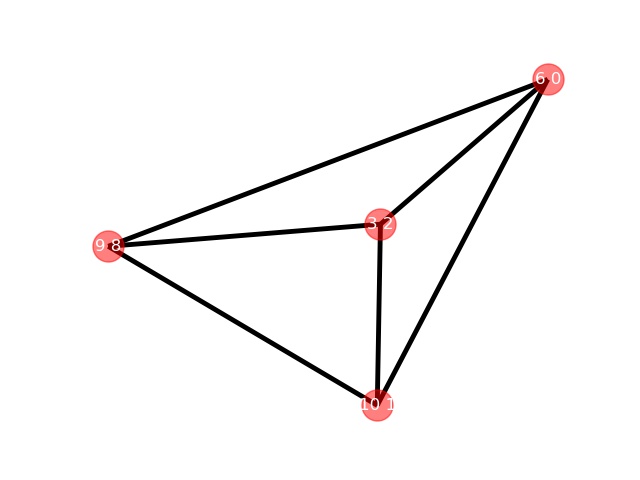}
         %\label{fig:y equals x}
     \end{subfigure}
\caption{Wheel.}
\end{minipage} 
\begin{minipage}[b]{.29\textwidth}\centering
\begin{subfigure}
         \centering
         \includegraphics[trim={2cm 1.5cm 1.9cm 1.5cm},clip,width=.97\linewidth]{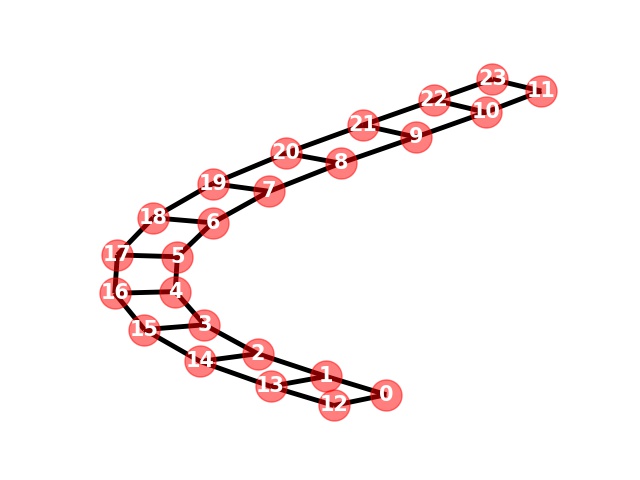}
         %\caption{$y=x$}
         %\label{fig:y equals x}
     \end{subfigure} 
     \begin{subfigure}
         \centering
         \includegraphics[trim={2cm 1.5cm 1.9cm 1.5cm},clip,width=.97\linewidth]{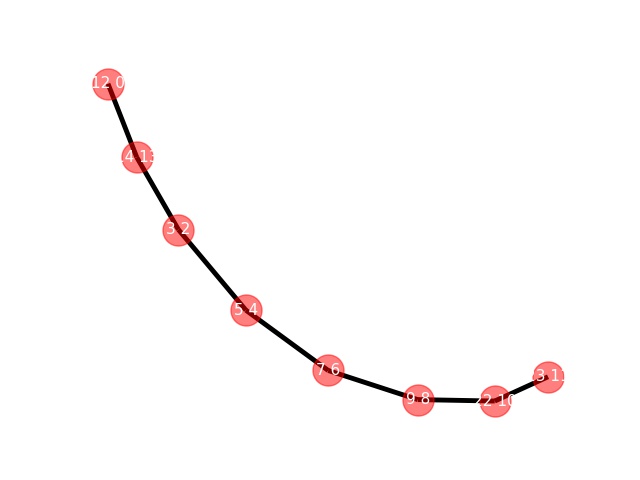}
         %\label{fig:y equals x} %;laddery2 ladderx2
     \end{subfigure}
     \caption{Ladder.}
\end{minipage}\qquad
\caption{Orginal graphs (top) and their sketched graphs. The node labels on the sketched graphs are determined using the transport plan $P$, specifically the label on a node contains the two top-weighted nodes in the original graph whose mass flowed into that node. COPT sketches structurally similar nodes in the original graph to the same or nearby nodes.} \label{fig:summary}
\vspace{-18pt}
\end{figure*}

\section{Acknowledgments}
This research was conducted during the period Will Sawin served as a Clay Research Fellow.

%mention comparison with GOT in supplementary material

% In the unusual situation where you want a paper to appear in the
% references without citing it in the main text, use \nocite
%\nocite{langley00}

\bibliography{main}
\bibliographystyle{alpha}

%\maketitle

%\icmltitle{Supplementary Material\\ 
%COPT: Coordinated Optimal Transport for Graph Sketching}

\section*{Supplement Outline}
In this supplement, we give full proofs of Lemmas \ref{min-formula} and \ref{COPT-metric}, further discuss COPT optimizations during training, as well as give additional results on graph summarization and visualiation, training progress, and comparison between COPT and GOT, including both theoretical and empirical results that GOT is a special case of COPT when $N=M$.

\section{Full proofs to lemmas}
We give full proofs to Lemma~\ref{min-formula}, an analytic formula for the COPT metric, and Lemma~\ref{COPT-metric}, the statement that COPT is a metric.

%\label{min-formula}
\begin{lemma}[Lemma 3.1] Let $X$ and $Y$ be graphs with vertices sets of size $N$ and $M$ respectively. Then
\[ \inf_{ \substack{ T: \mathbb R^X \to \mathbb R^Y \\ T_{\#} \mu^X = \mu^Y }} \int_{\mathbb R^X} \sum_{ x\in X} \sum_{y \in Y}  (f(x) - Tf(y))^2 P(x,y)  d\mu^X (f)\] \[ =  M \tr (L_X^\dagger) + N \tr (L_Y^{\dagger})  - 2\tr (   ((L_Y^{\dagger})^{1/2} P^T L_X^{\dagger} P (L_Y^{\dagger})^{1/2})^{1/2}) \] where $P$ is the matrix with entries $P(x,y)$.
\end{lemma}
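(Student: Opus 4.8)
The plan is to reinterpret the left-hand side as a weighted quadratic Wasserstein distance between two (degenerate) Gaussian measures living on the space of functions over $X\times Y$, apply the closed-form Gaussian Wasserstein (Bures/Gelbrich) formula, and then simplify using the cyclic invariance of traces.

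\textbf{Step 1: embed into $\mathbb R^{X\times Y}$.} Define the injective linear maps $\iota_X:\mathbb R^X\to\mathbb R^{X\times Y}$ by $(\iota_X f)(x,y)=f(x)$ and $\iota_Y:\mathbb R^Y\to\mathbb R^{X\times Y}$ by $(\iota_Y g)(x,y)=g(y)$, and let $W$ be the diagonal operator on $\mathbb R^{X\times Y}$ with $W_{(x,y),(x,y)}=P(x,y)$, with quadratic form $\|h\|_W^2=h^\top W h$. Then $\sum_{x,y}(f(x)-Tf(y))^2P(x,y)=\|\iota_X f-\iota_Y(Tf)\|_W^2$, so the integral is the transport cost, for the quadratic cost $\|\cdot\|_W^2$, of the coupling of $\tilde\mu^X:=(\iota_X)_\#\mu^X$ and $\tilde\mu^Y:=(\iota_Y)_\#\mu^Y$ induced by $\iota_X f\mapsto\iota_Y(Tf)$. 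Since $\iota_X,\iota_Y$ are injective, maps $T:\mathbb R^X\to\mathbb R^Y$ with $T_\#\mu^X=\mu^Y$ correspond exactly to transport maps between $\tilde\mu^X$ and $\tilde\mu^Y$, so the infimum over $T$ equals the infimum of the transport cost over such maps.

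\textbf{Step 2: Gaussian formula.} $\tilde\mu^X$ and $\tilde\mu^Y$ are centered Gaussians with (generally singular) covariances $\Sigma_X=\iota_X L_X^\dagger\iota_X^\top$ and $\Sigma_Y=\iota_Y L_Y^\dagger\iota_Y^\top$. Because the optimal coupling of two centered Gaussians for a quadratic cost is induced by a linear map, the infimum over $T$ equals the squared Wasserstein distance in the $\|\cdot\|_W$ geometry; substituting $h\mapsto W^{1/2}h$ reduces this to the standard quadratic cost, so the Gaussian Wasserstein formula \cite{wassGaussian} expresses it as $\tr A+\tr B-2\tr((A^{1/2}BA^{1/2})^{1/2})$, where $A=W^{1/2}\Sigma_X W^{1/2}$ and $B=W^{1/2}\Sigma_Y W^{1/2}$ (the mean terms vanish since the measures are centered). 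A direct computation using $\sum_y P(x,y)=M$ and $\sum_x P(x,y)=N$ gives $\iota_X^\top W\iota_X=M\,I_N$ and $\iota_Y^\top W\iota_Y=N\,I_M$, hence $\tr A=\tr(\iota_X^\top W\iota_X\,L_X^\dagger)=M\tr L_X^\dagger$ and $\tr B=N\tr L_Y^\dagger$. Writing $A=\tilde U L_X^\dagger\tilde U^\top$, $B=\tilde V L_Y^\dagger\tilde V^\top$ with $\tilde U=W^{1/2}\iota_X$, $\tilde V=W^{1/2}\iota_Y$, one checks directly that $\tilde U^\top\tilde V=P$. By cyclicity of the trace the nonzero eigenvalues of $A^{1/2}BA^{1/2}$ agree with those of $AB=\tilde U L_X^\dagger\tilde U^\top\tilde V L_Y^\dagger\tilde V^\top$, hence with those of $L_X^\dagger P L_Y^\dagger P^\top$, hence with those of the PSD matrix $(L_Y^\dagger)^{1/2}P^\top L_X^\dagger P(L_Y^\dagger)^{1/2}$; since the trace of the square root of a PSD matrix is the sum of square roots of its eigenvalues (zeros contributing nothing), $\tr((A^{1/2}BA^{1/2})^{1/2})=\tr(((L_Y^\dagger)^{1/2}P^\top L_X^\dagger P(L_Y^\dagger)^{1/2})^{1/2})$. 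Combining the three terms gives the claimed identity.

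\textbf{Main obstacle.} The Kronecker/trace bookkeeping of Step 2 is routine; the delicate point is justifying in Steps 1--2 that the infimum over arbitrary measurable $T$ (not assumed linear) really equals the Gaussian Wasserstein distance, given that $\Sigma_X,\Sigma_Y$ are degenerate (rank $N-1$ and $M-1$ for connected graphs) and that $\|\cdot\|_W$ is only a seminorm when some $P(x,y)=0$. I would handle this by first assuming $P$ has strictly positive entries, so $\|\cdot\|_W$ is a genuine norm and the standard Gaussian optimal transport theory applies verbatim, and then passing to the case $P(x,y)=0$ by continuity of both sides in $P$; realizability of the optimal linear coupling as an admissible $T$ follows because a linear map pushing $\tilde\mu^X$ forward to $\tilde\mu^Y$ carries $\operatorname{supp}\tilde\mu^X\subseteq\iota_X(\mathbb R^X)$ into $\iota_Y(\mathbb R^Y)$ and hence factors through $\iota_X$ and $\iota_Y$.
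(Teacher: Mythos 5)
Your proposal is correct and follows essentially the same route as the paper's proof: your maps $\tilde U=W^{1/2}\iota_X$ and $\tilde V=W^{1/2}\iota_Y$ are exactly the paper's weighted embeddings $A,B:\mathbb R^X,\mathbb R^Y\to\mathbb R^{X\times Y}$, after which both arguments invoke the Gaussian Wasserstein formula and the same cyclic-trace manipulations. The only difference is that you explicitly flag and handle the degenerate cases (singular covariances, zero entries of $P$) that the paper passes over silently, which is a welcome refinement rather than a change of method.
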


\begin{proof} Let $A$ be the map from $\mathbb R^X$ to $\mathbb R^{X\times Y}$ that sends a function $f$ on $X$ to the function $f(x) \sqrt{P(x,y)}$ on $X\times Y$. Similarly, let $B$ be the map from $\mathbb R^Y$ to $\mathbb R^{X\times Y}$ that sends a function $f$ to $f(y) \sqrt{P(x,y)}$. Then the distance between $A(f)$ and $B(T(f))$ in $\mathbb R^{X\times Y} $ is \[  \sum_{ x\in X} \sum_{y \in Y}  (f(x)\sqrt{ P(x,y)} - Tf(y) \sqrt{P(x,y)} )^2 =  \sum_{ x\in X} \sum_{y \in Y}  (f(x) - Tf(y))^2 P(x,y).\]

Thus, we can interpret this minimum as the Wasserstein distance between the pushforward $A_{\#} \mu^X$ of $\mu^X$ along $A$ and the pushforward $B_{\#} \mu^Y$ of $\mu^Y$ along $B$. Because $A$ and $B$ are linear maps, these distributions are both multivariate Gaussians with mean zero. Thus, we can use the formula for the Wasserstein distance between multivariate Gaussians with mean zero, which is expressed in terms of their covariance matrices \cite[Theorem 2.2 and Remark 4.2]{wassGaussian}. The formula is \[ \tr(  V(A_{\#}\mu^X)) + \tr( V(B_{\#}\mu^Y)) - 2 \tr \left(\left(  V(A_{\#}\mu^X)^{1/2}V(B_{\#}\mu^Y)V(A_{\#}\mu^X)^{1/2}\right)^{1/2} \right).\]

We have \[V(A_{\#}\mu^X)= A L_X^\dagger A^T \] and  \[V(B_{\#}\mu^Y)= B L_Y^\dagger B^T \] and we have by direct calculation \[A^T A = M I_N ,\]  \[ B^T B = N I_M,\] \[ B^T A= P,\] \[ A^T B = P^T,\] giving

\[ \tr(  V(A_{\#}\mu^X)) = \tr(  A L_X^\dagger A^T ) = \tr( L_X^\dagger A^T A) = M \tr ( L_X^\dagger) \]
\[ \tr(  V(B_{\#}\mu^Y)) = \tr(  B L_Y^\dagger B^T ) = \tr( L_Y^\dagger B^T B) = N \tr ( L_Y^\dagger) \]
For the last term, it is convenient to define the trace of the square root of a matrix as the sum of the square roots of its eigenvalues, so that it can be defined for more than just symmetric positive definite matrices, allowing us to write
\[  \tr \left(\left(  V(A_{\#}\mu^X)^{1/2}V(B_{\#}\mu^Y)V(A_{\#}\mu^X)^{1/2} \right)^{1/2} \right) =  \tr \left(\left(  V(A_{\#}\mu^X)V(B_{\#}\mu^Y)V\right)^{1/2} \right)\]
\[ = \tr \left(\left( A L_X^\dagger A^T B L_Y^\dagger B^T \right)^{1/2}\right) =  \tr \left(\left( B^T A L_X^\dagger A^T B L_Y^\dagger  \right)^{1/2}\right)\]
\[ = \tr \left(\left( P L_X^\dagger P^T L_Y^\dagger  \right)^{1/2}\right) =  \tr \left(\left( \left(L_Y^\dagger\right)^{1/2} P L_X^\dagger P^T \left(L_Y^\dagger\right)^{1/2}  \right)^{1/2}\right).\]

Combining these, we get exactly the stated formula. Note that, in our final formula, the matrix is again semidefinite, so we can take a canonical square root.
\end{proof}

%\label{coopt-metric}
\begin{lemma}[Lemma 3.2] $\Delta(X,Y)$ is a metric on the set of isomorphism classes of finite graphs. \end{lemma}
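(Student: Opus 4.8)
The plan is to verify the four metric axioms for $\Delta$ one by one, in each case imitating the corresponding step in the standard proof that the Wasserstein distance $W_2$ is a metric, but carrying out on \emph{both} the transport plan $P$ and the transport map $T$ whatever construction is applied to the single coupling in that argument. Nonnegativity is immediate, since the integrand $\sum_{x,y}(f(x)-Tf(y))^2P(x,y)$ is a sum of squares weighted by the nonnegative numbers $P(x,y)$; finiteness of the double infimum follows by applying Lemma~\ref{min-formula} to any fixed plan, e.g. $P\equiv 1$. Since the polytope of transport plans is compact and, by Lemma~\ref{min-formula}, the inner infimum is a continuous function of $P$, the outer infimum over $P$ is attained, which will be convenient below.

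For symmetry, I would write the cross term in the formula of Lemma~\ref{min-formula} as $2\tr\!\big((U^{T}U)^{1/2}\big)$, where $U=(L_X^\dagger)^{1/2}P(L_Y^\dagger)^{1/2}$; this equals $2\sum_i\sigma_i(U)$, twice the sum of the singular values of $U$. Since $P\mapsto P^{T}$ is a bijection between transport plans on $X\times Y$ and on $Y\times X$, replacing $(X,Y,P)$ by $(Y,X,P^{T})$ replaces $U$ by $U^{T}$, which has the same singular values; as the first two terms $M\tr(L_X^\dagger)+N\tr(L_Y^\dagger)$ are visibly symmetric, $\Delta(X,Y)=\Delta(Y,X)$.

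For identity of indiscernibles, the direction $X\cong Y\Rightarrow\Delta(X,Y)=0$ is checked by setting $P=N\Pi$, with $\Pi$ the permutation matrix of an isomorphism (so $N=M$ and $\Pi^{T}L_X^\dagger\Pi=L_Y^\dagger$); Lemma~\ref{min-formula} then makes the cross term $2N\tr(L_Y^\dagger)$ and the whole quantity $N\tr(L_X^\dagger)-N\tr(L_Y^\dagger)$, which vanishes since conjugate matrices have equal trace. The converse is the crux. Using the proof of Lemma~\ref{min-formula}, for the optimal $P$ the quantity $NM\Delta(X,Y)^2$ is the squared Wasserstein distance between the centered Gaussians with covariances $AL_X^\dagger A^{T}$ and $BL_Y^\dagger B^{T}$ on $\mathbb R^{X\times Y}$; hence $\Delta(X,Y)=0$ forces these covariances to be equal, which entrywise reads $\sqrt{P(x_1,y_1)P(x_2,y_2)}\,\big((L_X^\dagger)_{x_1x_2}-(L_Y^\dagger)_{y_1y_2}\big)=0$ for all indices. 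Thus $(L_X^\dagger)_{x_1x_2}=(L_Y^\dagger)_{y_1y_2}$ whenever $(x_1,y_1),(x_2,y_2)\in\operatorname{supp}(P)$. If some vertex $x$ had two distinct partners $y\neq y'$ in $\operatorname{supp}(P)$, these relations would give $(L_Y^\dagger)_{yy}=(L_Y^\dagger)_{yy'}=(L_Y^\dagger)_{y'y'}$ and hence $(e_y-e_{y'})^{T}L_Y^\dagger(e_y-e_{y'})=0$; but $L_Y^\dagger$ is positive semidefinite with kernel spanned by the all-ones vector (connectivity), so $e_y-e_{y'}$ would be a multiple of it, a contradiction. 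Hence $\operatorname{supp}(P)$ is a perfect matching — in particular $N=M$ — so $P=N\Pi$ for a permutation $\Pi$, and the entrywise relations become $\Pi^{T}L_X^\dagger\Pi=L_Y^\dagger$. Applying the Moore--Penrose inverse (which commutes with conjugation by a permutation matrix) gives $\Pi^{T}L_X\Pi=L_Y$, and since the adjacency matrix of a graph equals $\operatorname{diag}(L)-L$, this $\Pi$ is a graph isomorphism $X\cong Y$.

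For the triangle inequality on graphs $X,Y,Z$, I would take optimal pairs $(P,T)$ for $(X,Y)$ and $(Q,S)$ for $(Y,Z)$ (the infima being attained), and use the composite map $S\circ T$, which satisfies $(S\circ T)_{\#}\mu^X=\mu^Z$, together with the composite plan $R(x,z)=\tfrac1M\sum_{y\in Y}P(x,y)Q(y,z)$; a direct check of marginals shows $R$ is a transport plan on $X\times Z$. Putting $g=Tf$ and writing $f(x)-S(g)(z)=(f(x)-g(y))+(g(y)-S(g)(z))$, Minkowski's inequality in $L^2$ of the measure $\tfrac1M P(x,y)Q(y,z)\,d\mu^X(f)$ on $X\times Y\times Z\times\mathbb R^X$ bounds the objective by the sum of two pieces; using $\sum_x P(x,y)=N$ and $\sum_z Q(y,z)=|Z|$ to perform the $x$- and $z$-sums, and the change of variables $T_{\#}\mu^X=\mu^Y$ in the second piece, these pieces equal $N|Z|\Delta(X,Y)^2$ and $N|Z|\Delta(Y,Z)^2$, while the left side is at least $N|Z|\Delta(X,Z)^2$; dividing by $\sqrt{N|Z|}$ gives $\Delta(X,Z)\le\Delta(X,Y)+\Delta(Y,Z)$. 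The one genuinely delicate point is the converse half of identity of indiscernibles — ruling out diffuse transport plans and forcing $P$ to be a scalar multiple of a permutation that conjugates the Laplacian pseudoinverses; the remaining axioms are the Wasserstein-metric arguments together with the extra bookkeeping needed to track the second plan $P$.
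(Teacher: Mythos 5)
Your proof is correct and follows essentially the same route as the paper's: verifying each axiom by mirroring the Wasserstein argument on both $P$ and $T$, using cyclic invariance of $\tr((\cdot)^{1/2})$ (equivalently, singular values of $(L_X^\dagger)^{1/2}P(L_Y^\dagger)^{1/2}$) for symmetry, forcing the support of an optimal $P$ to be a permutation via the effective-resistance/positive-definiteness of $L^\dagger$ on the complement of constants for the converse identity of indiscernibles, and composing plans via $\frac{1}{M}\sum_y P(x,y)Q(y,z)$ with Minkowski's inequality for the triangle inequality. The one caveat is your assertion that the inner infimum over the map $T$ is attained --- the paper deliberately works with $\epsilon$-approximate minimizers in the triangle inequality precisely because the Monge infimum need not be achieved --- but this is a cosmetic fix that does not affect the argument.
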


\begin{proof} If $X= Y$ then the distance is $0$, because then $L_X= L_Y$ and we can take $P$ to be the diagonal matrix $N I_N$.

Conversely, let us check that if the distance is zero then $X=Y$. Because the distance is the minimum of a continuous function over the compact set of possible values of $P$, if the distance is zero then zero is obtained for a particular value of $P$. Because the Wasserstein distance between two distributions is zero if and only if they are the same distribution, it follows that (in the notation of the previous lemma) $A L_X^\dagger A^T = B L_Y^\dagger B^T$, which in concrete terms means that \[ \left( L_X^{\dagger}\right)_{x_1x_2} = \left( L_Y^\dagger\right)_{y_1y_2}\] whenever $P(x_1,y_1)\neq 0$ and $P(x_2,y_2)\neq 0$. Taking $y_1=y_2=y$, this implies \[ \left( L_X^{\dagger}\right)_{x_1x_2} = \left( L_X^{\dagger}\right)_{x_1x_1} \] whenever $P(x_1,y)\neq 0$ and $P(x_2,y)\neq 0$. This identity implies $x_1=x_2$, because the unique maximum of a column of $L_{X}^{\dagger}$ is on the diagonal.  Thus, we see that $P(x,y)\neq 0$ for a unique $x$ for each $y$. By symmetry this must also be true for a unique $y$ for each $x$. Because $P(x,y)\neq 0$ for at least one $y$ for each $x$, and at least one $x$ for each $y$, the set where $P(x,y)\neq 0$ defines a permutation. After applying that permutation, our identity can be written \[ L_X^\dagger = L_Y^\dagger\] which implies \[ L_X =L_Y\] and thus \[X=Y.\]

Symmetry is easiest to check using Lemma \ref{min-formula} and the fact that the trace of the square root of a matrix, like the trace of a matrix, is invariant under cyclic permutations, so 
\[ \tr (   ((L_Y^{\dagger})^{1/2} P^T L_X^{\dagger} P (L_Y^{\dagger})^{1/2})^{1/2}) = \tr (   ((L_Y^{\dagger})^{1/2} P^T ( L_X^{\dagger})^{1/2} ( L_X^{\dagger})^{1/2}  P (L_Y^{\dagger})^{1/2})^{1/2})\]
\[ = \tr (   (( L_X^{\dagger})^{1/2}  P (L_Y^{\dagger})^{1/2}(L_Y^{\dagger})^{1/2} P^T ( L_X^{\dagger})^{1/2} )^{1/2}) = \tr (   (( L_X^{\dagger})^{1/2}  P L_Y^\dagger  P^T ( L_X^{\dagger})^{1/2} )^{1/2} )\]
so swapping $P$ and $P^T$, our formulas for the two distances are equal. 

Finally, we check the triangle inequality. Let $X,Y,$ and $Z$ be graphs with $|X|=N, |Y|=M,$ and $|Z|= L$. Let $P$ and $T$ satisfy the conditions in the definition of $\Delta(X,Y)$ so that the integral is within $\epsilon$ of the minimum value. Similarly let $Q$ and $S$ satisfy the conditions in the definition of $\Delta(Y,Z)$. It suffices to show that
\begin{equation}\label{desired-triangle-inequality} \left(\frac{1}{MN} \int_{\mathbb R^X} \sum_{ x\in X} \sum_{y \in Y}  (f(x) - Tf(y))^2 P(x,y)  d\mu^X (f)\right)^{1/2} \end{equation}\[ +  \left(\frac{1}{NL} \int_{\mathbb R^Y} \sum_{y\in Y} \sum_{z \in Z}  (f(y) - Sf(z))^2 Q(y,z)  d\mu^Y (f)\right)^{1/2} \] \[ \geq  \left(\frac{1}{ML} \int_{\mathbb R^X} \sum_{ x\in X} \sum_{z \in Z}  (f(x) - STf(z))^2  \sum_{y\in Y} \frac{P(x,y) Q(y,z)}{N}  d\mu^X (f)\right)^{1/2}.\]
because then $\sum_{y\in Y} \frac{P(x,y) Q(y,z)}{N}$ and $S \circ T$ satisfy the conditions in the definition of $\Delta(X,Z)$, and then taking $\epsilon$ sufficiently small, we obtain the triangle inequality. 

This inequality follows from the Cauchy-Schwarz inequality
\begin{equation}  \left(\frac{1}{MNL} \int_{\mathbb R^X} \sum_{ x\in X}\sum_{y\in Y}  \sum_{z \in Z}  (f(x) - STf(z))^2  P(x,y) Q(y,z)  d\mu^X (f)\right)^{1/2} \leq\end{equation}
\begin{equation}\label{first-term} \left(\frac{1}{MNL} \int_{\mathbb R^X} \sum_{ x\in X}\sum_{y\in Y}  \sum_{z \in Z}  (f(x) - Tf(y))^2  P(x,y) Q(y,z)  d\mu^X (f)\right)^{1/2} \end{equation} \begin{equation}\label{second-term} +  \left(\frac{1}{MNL} \int_{\mathbb R^X} \sum_{ x\in X}\sum_{y\in Y}  \sum_{z \in Z}  (Tf(y) - STf(z))^2  P(x,y) Q(y,z)  d\mu^X (f)\right)^{1/2}\end{equation}
where \eqref{first-term} simplifies to the first term of \eqref{desired-triangle-inequality} by using $\sum_{z\in Z}Q(y,z)=L$ and \eqref{second-term} simplifies to the second term of \eqref{desired-triangle-inequality} by using $\sum_{x\in X} P(x,y) = N$ and $T_{\#}\mu_X =\mu_Y$ so the integral against $\mu_X$ of a function of $Tf$ is equal to the integral against $\mu_Y$ of the same function of $f$.

\end{proof}

\section{Further details on COPT optimization}
\label{sec:optimization}

Here we elaborate further on COPT's optimization routine. As the objective Equation~\ref{min-formula} is not globally convex, gradient descent can fall into local minima. To facilitate faster convergence and convergence towards global minima, we use a combination of learning rate scheduling and learning rate hikes. 

Specifically, the learning rate is initialized at $0.4$. During training, it's scaled multiplicatively by $0.7$ per $100$ iterations. Once the \emph{change} in loss drops below a threshold, set at $0.002$ throughout, for 10 different iterations, the learning rate is increased five-fold, capped at 4.0. If an upper bound on the number of iterations is set, learning rate hikes stop $200$ iterations before the max number of iterations, to allow convergence. These parameters can be tuned with respect to the downstream task, we have found COPT to be robust with respect to the parametrizations. 

Figure~\ref{fig:lr_hike} illustrates the effects learning rate hikes on training loss, where LR hiking is the only difference between the two curves. The loss increases briefly when the learning rate is hiked, but can soon drop below the level had there been no hikes. LR hiking not only improves convergence rate, but experiments show it also changes the finaly transport plan $P$, indicating convergence to a better minimum.

We observe that this optimization routine allows for \textbf{faster runtime} per iteration than stochastic exploration used in GOT, which, to avoid converging to local minima, uses a stochastic exploration method that minimizes the \emph{expectation} of the distance, rather than the distance itself. But this requires a nontrivial number (e.g. 10) of random explorations at each training step to achieve good performance.

For instance, when aligning the same set of $50$-node graphs, on the same machine and CPU, $1000$ iterations of GOT takes $18.8 \pm 0.72 $ seconds, and COPT takes $3.18 \pm 0.59 $ seconds, with settings that were tuned to produce the best community discovery results for each method, which as observed are commensurable (as described in \S\ref{sec:compare}). This is repeated 20 times.

%plot lr & loss wrt iteration!! and GOT lr and loss

\begin{figure}
    \centering
    \includegraphics[width=.5\linewidth]{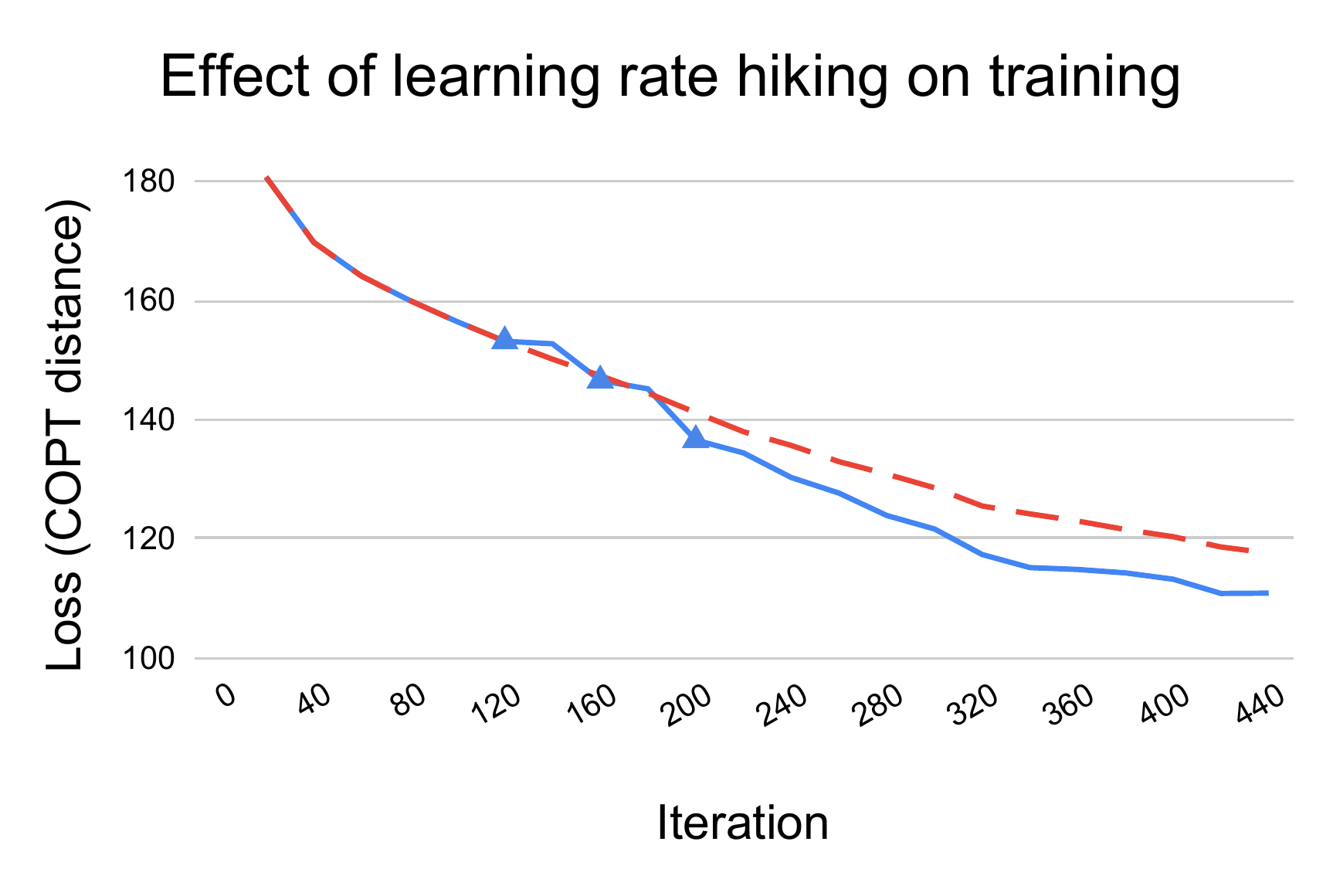}
    \caption{The impact of learning rate (LR) hiking on COPT training, in this case computing the alignment and distance between two 25-node graphs. The dashed line represents no LR hiking, the solid line represents LR hiking, the triangles indicate LR hikes. The hikes are triggered when the change in loss falls below $0.002$ over $10$ different iterations. Not only are the rates of convergence different, the presence or absence of LR hiking can lead to different permutations.}\label{fig:lr_hike}
\end{figure}

\begin{wrapfigure}{r}{0.45\textwidth}
    \vspace{-15pt}
    \begin{center}
    \includegraphics[width=.99\linewidth]{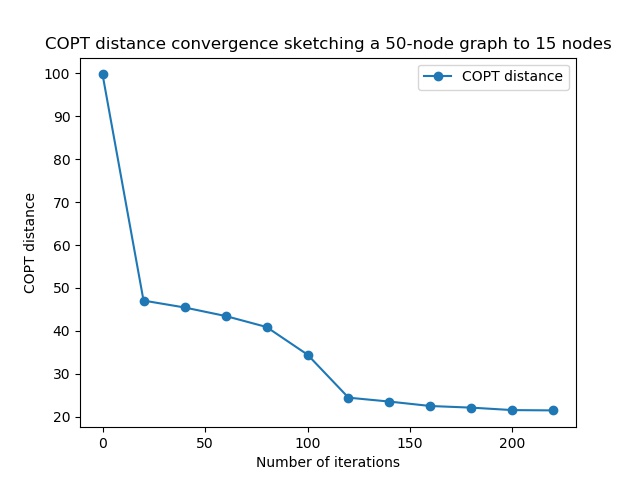}
    \end{center}
    \vspace{-10pt}
    \caption{Sketching a $50$-node graph down to $15$ nodes with COPT, which typically converges in $\sim 200$ iterations.}
    \vspace{-14pt}
    \label{fig:train_convergence}
\end{wrapfigure}

%Note that for Sinkhorn iterations, in particular there is no     parameter $\eta$.

%\section{Training progress}

\section{Further comparison with GOT}
\label{sec:compare}
Under the special case $N=M$, beacuse the transport plan $P$ converges to a permutation, COPT reduces to the GOT metric described in \cite{MGCP}. We prove this in Lemma~\ref{to-permutation}. As this case uses the same optimization backbone when $N\ne M$ as well as for sketching, COPT and GOT differ in their implementations. 

We want to compare the methods in the special case $N=M$, to ensure that, as in theory, the two implementations are commensurable. 

However, the permutations between the two in the alignment task are not directly comparable, as many graphs carry symmetries that allow multiple transport plans to achieve the same objective minimum \footnote{Inspection by hand reveals that, on ``simple" graphs such as two-block graphs, the transport plans produced by COPT and GOT are the same up to symmetry.}.

%At the same time, comparing the exact distance values is less useful practically, as the implementations use different normalization factors, some of which are needed to enhance numerical stability. 

Thus, to compare our implementation with GOT's, we repeat one key alignment experiment in \cite{MGCP}, where $40$-node four-community graphs are aligned with corrupted and permuted versions of themselves, and the alignment quality is measured using the normalized mutual information (NMI).

Specifically, we 1) randomly delete a given number of nodes, 2) permute nodes in the reduced graphs, 3) align between the original graphs and the permuted reduced graphs using COPT and GOT, and 4) compare the resulting NMI using the permutations produced by the alignment algorithms. 

\begin{table}[t]
\addtolength{\tabcolsep}{-3.5pt}
\centering
\begin{tabular}{lclclclclclclclclc}
\hline
\hline
  %& Spectral proj. & COPT & GW \\
 \#edge rem. & 30 & 60 & 90 & 120 & 150 & 180 & 210 & 240 \\
 \hline
GOT & .997$\pm$.013 & 1.$\pm$0. & .991$\pm$.022 & .960$\pm$.059 & .837$\pm$.092 & .855$\pm$.10 &  .844 $\pm$.096   &   .862$\pm$.11   \\
 COPT & 1.$\pm$0. & 1.$\pm$0. &  .994$\pm$.026 & .964$\pm$.094 & .873$\pm$.13 & .848$\pm$.13  & .897$\pm$.125 &  .871$\pm$.14  \\
 \hline
\end{tabular}
\vspace{2 pt}
\caption{\textbf{Alignment quality as gauged by community discovery performance.} Normalized mutual information (NMI) scores between $40$-node graphs against corrupted and then permuted versions of themselves, using GOT and COPT implementations. Higher is better. First row indicates number of edges removed. The same upper bound on the number of iterations, 1000, is used for both methods, and parameters tuned.
GOT is a special case of COPT when $N=M$, and the NMI above demonstrates that they are also commensurable empirically.}\label{tab:mi}
\vspace{-10pt}
\end{table}

As shown in Table~\ref{tab:mi}, the NMI are commensurable. Each measurement is repeated 20 times after parameter tuning. Furthermore, the GOT NMI are consistent with \cite{MGCP}, which also report that these NMI scores outperform the NMI for the same evaluation settings using GW.
\vspace{3 pt}

\section{Characterization of minima} The following lemma characterizes the minima in the COPT objective function, in general and in the special case $N=M$.

\begin{lemma}\label{to-permutation} Any local, hence global, minimum of the function  \[\inf_{ \substack{ T: \mathbb R^X \to \mathbb R^Y \\ T_{\#} \mu^X = \mu^Y }} \int_{\mathbb R^X} \sum_{ x\in X} \sum_{y \in Y}  (f(x) - Tf(y))^2 P(x,y)  d\mu^X (f)\] on the convex polyhedron
\[  \left \{  P: X\times Y \to \mathbb R^{+} \mid \sum_{x \in X} P (x,y) = N, \sum_{y \in Y} P(x,y) = M \right \} \] that is isolated, in the sense that no other point within distance $\epsilon$ for any $\epsilon$ is a local minimum, is a vertex of that polyhedron.

In particular, if $N=M$, every isolated local minimum is a permutation matrix. \end{lemma}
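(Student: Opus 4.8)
The plan is to show that, viewed as a function of $P$ alone, the COPT objective is \emph{concave} on the transportation polytope, and then to invoke the elementary fact that an isolated local minimum of a concave function on a polytope must be a vertex. First I would rewrite the objective using Lemma~\ref{min-formula}: the terms $M\tr(L_X^\dagger)+N\tr(L_Y^\dagger)$ are constant in $P$, so it suffices to analyze
\[ h(P) := \tr\left(\left((L_Y^\dagger)^{1/2} P^T L_X^\dagger P (L_Y^\dagger)^{1/2}\right)^{1/2}\right). \]
Writing $B(P) := (L_X^\dagger)^{1/2} P (L_Y^\dagger)^{1/2}$, which makes sense since $L_X^\dagger$ and $L_Y^\dagger$ are positive semidefinite and hence have canonical positive semidefinite square roots, one has $B(P)^T B(P) = (L_Y^\dagger)^{1/2} P^T L_X^\dagger P (L_Y^\dagger)^{1/2}$, so with the trace of the square root read (as in the statement) as the sum of the square roots of the eigenvalues, $h(P) = \tr\left((B(P)^T B(P))^{1/2}\right)$ is exactly the nuclear norm $\|B(P)\|_*$, the sum of the singular values of $B(P)$. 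Since $P \mapsto B(P)$ is linear and a norm is convex, $h$ is convex in $P$, hence the objective, which equals $\mathrm{const}-2h(P)$, is concave in $P$. I expect this identification to be the one genuinely load-bearing observation.

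Next I would establish the general claim: if $g$ is continuous and concave on a polytope $K$ and $P_0\in K$ is a local minimum that is not a vertex of $K$, then $P_0$ is not isolated among local minima. Since $P_0$ is not a vertex there are a unit vector $v\neq 0$ and $\eta>0$ with $P_0\pm tv\in K$ for $0\le t\le\eta$; let $r>0$ witness local minimality of $P_0$. The concave one-variable function $\phi(t):=g(P_0+tv)$ has a local minimum at $0$, and for $0<s<\min(\eta,r)$ concavity gives $\phi(0)\ge\tfrac12\phi(s)+\tfrac12\phi(-s)$ while $\phi(\pm s)\ge\phi(0)$, forcing $\phi(s)=\phi(-s)=\phi(0)$; thus $g\equiv g(P_0)$ along the segment $\{P_0+tv:|t|<\min(\eta,r)\}$. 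Each such point $P_0+tv$ is itself a local minimum of $g$ on $K$, since any $Q\in K$ with $\|Q-(P_0+tv)\|<r-|t|$ satisfies $\|Q-P_0\|<r$, whence $g(Q)\ge g(P_0)=g(P_0+tv)$. So $P_0$ is surrounded by other local minima, contradicting isolation; this is the contrapositive, and combined with the concavity of the preceding paragraph it yields that every isolated local minimum of the COPT objective is a vertex of the transportation polytope.

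Finally, for $N=M$ I would identify the vertices: the feasibility constraints read $\sum_x P(x,y)=N=\sum_y P(x,y)$, so $P=NQ$ with $Q$ doubly stochastic, and the Birkhoff--von Neumann theorem identifies the extreme points as the matrices $N\Pi$ with $\Pi$ a permutation matrix, i.e. scaled permutation matrices, which gives the final assertion. The main obstacle is really just the first step — recognizing the trace term as a nuclear norm so that concavity becomes transparent; the vertex lemma and the Birkhoff identification are then routine.
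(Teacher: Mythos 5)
Your proof is correct, but it reaches concavity by a genuinely different route than the paper. You start from the closed form in Lemma~\ref{min-formula} and observe that the only $P$-dependent term, $\tr\bigl(((L_Y^\dagger)^{1/2}P^TL_X^\dagger P(L_Y^\dagger)^{1/2})^{1/2}\bigr)$, is the nuclear norm of the linear image $B(P)=(L_X^\dagger)^{1/2}P(L_Y^\dagger)^{1/2}$, hence convex, so the objective is concave; this identification is valid (the matrix inside is $B(P)^TB(P)$, so the trace of its square root is the sum of singular values of $B(P)$), and it makes the concavity of the COPT objective in $P$ completely explicit, which is a nice observation in its own right. The paper instead never invokes the analytic formula: it switches to the joint-measure formulation, writes the objective as $\min_\gamma L(P,\gamma)$ with $L$ linear in $P$ for each fixed coupling $\gamma$, fixes the optimal $\gamma_0$ at $P_0$, and uses the fact that the linear function $L(\cdot,\gamma_0)$ is constant on any face containing $P_0$ to force a whole neighborhood of local minima when $P_0$ is not a vertex. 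That argument is the ``infimum of linear functions is concave'' mechanism in disguise, so the two proofs share the same skeleton, but yours packages it through a standard norm-convexity fact while the paper's is self-contained and would survive in settings where no closed form for the inner infimum is available (e.g.\ non-Gaussian graph signals). Your intermediate lemma --- that an isolated local minimum of a continuous concave function on a polytope is a vertex, proved via constancy along a bidirectional segment and the triangle-inequality argument that every point of that segment is again a local minimum --- is correct and is essentially the same contradiction the paper derives. For the final step the paper proves the Birkhoff--von Neumann identification by hand (the bipartite support graph of a vertex has no cycles, hence is a union of single edges), whereas you cite the theorem directly; either is fine, and both yield the scaled permutation matrices $N\Pi$ that the statement calls permutation matrices under the paper's normalization.
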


Because local minima being isolated is a generic condition, this shows that the local minima are permutations away from a set of Laplacians $L_X, L_Y$ with measure $0$, and thus COPT will converge to a permutation outside a set of measure $0$.

\begin{proof} Let $P_0$ be a local minimum of \[ \inf_{ \substack{ T: \mathbb R^X \to \mathbb R^Y \\ T_{\#} \mu^X = \mu^Y }} \int_{\mathbb R^X} \sum_{ x\in X} \sum_{y \in Y}  (f(x) - Tf(y))^2 P(x,y)  d\mu^X (f) = \min_{\gamma} L (P, \gamma) \]
 where the minimum is taken over measures $\gamma$ on $\mathbb R^X \times \mathbb R^Y$ whose projection to $\mathbb R^X$  is $\mu^X$ and whose projection to $\mathbb R^Y$ is $\mu^Y$, and where\[ L(P, \gamma) =\int_{\mathbb R^X} \int_{ \mathbb R^y} \sum_{ x\in X} \sum_{y \in Y}  (f(x) - g(y))^2 P(x,y)  d\gamma (f,g).\]
 We switch from the transport map to the joint measure formulation of optimal transport as only for the joint measure is the minimum always attained.

Let $\gamma_0$ be the value of $\gamma$ which attains this minimum. Thus $P_0$ is also a local minimum of $L(P, \gamma_0)$. To prove this, note that if $L(P, \gamma_0)$ a smaller value at some nearby point, $\min_{\gamma} L(P, \gamma)$ a smaller value, because $\gamma_0$ is included in the minimum over $\gamma$, and so $P_0$ will fail to be a local minimum of $\min_{\gamma} L(P, \gamma)$ assumed.

Because $L(P, \gamma_0)$ is a linear function of $P$, its local minima are simply faces of the polyhedron of possible values of $P$. If $P$ is not a vertex, then it lies in a positive-dimensional face, on which $L(P, \gamma_0)$ is constant, and thus on which every value of $\min_{\gamma} L(P,\gamma)$ is at most $L(P_0,\gamma_0) =\min_{\gamma} L(P, \gamma)$. In this situation, the only way $P_0$ can be a local minimum of $\min_{\gamma} L(P, \gamma)$ is if $\min_{\gamma} L(P, \gamma)$ is actually constant on a neighborhood of $P_0$ in this face. In that case, every point in the constant region is a local minimum, contradicting our assumption that $P$ is a local minimum.

Finally, we must check that every vertex if $N=M$ corresponds to a permutation. To see this, let $P$ be a vertex, and consider a bipartite graph with vertices $X \cup Y$ with an edge connecting $x\in X$ and $y\in Y$ if and only if $P(x,y) \neq 0$.

Let us check that this graph does not contain any cycle. If it did, because the graph is bipartite, we could 2-color each edge of the cycle. Then for any sufficiently small $\epsilon>0$, raising $P(x,y)$ by $\epsilon$ for $(x,y)$ each red edge and lowering $P(x,y)$ by $\epsilon$ for $(x,y)$ each blue edge would preserve all the conditions on $P$ defining the polyhedron. Similarly, lowering $P(x,y)$ for the red edges and raising $P(x,y)$ for the blue edges would preserve the condition. Thus $P(x,y)$ is a convex combination of two different points in the polyhedron and hence is not a vertex.

Because this graph does not contain a cycle, all its connected components must be trees, and every tree contains a leaf. But if $x\in X$ is a leaf, then $P(x,y)>0$ for a unique $y$, so $P(x,y)= M =N$ and thus $P(x',y)=0$ for a unique $x'$. The same holds if $y\in Y$ is a leaf. Thus every leaf is connected to another leaf and so every tree is a single edge, so in fact every component of the graph is a single edge. Thus, each $x \in X$ is connected by exactly one edge to a unique $y\in Y$, and vice versa, defining a permutation $\sigma: X\to Y$ with $P(x,y) =\begin{cases} N & y=\sigma(x) \\ 0 & y \neq \sigma(x) \end{cases}$.

\end{proof} 

\section{Further details on datasets}

Table~\ref{tab-data} contains additional details on the graph datasets used in experiments, which recall are Proteins \cite{proteinData}, BZR\_MD \cite{kriegeBZRMD}, MSRC\_9 \cite{neumannMSRC}, and Enzymes \cite{enzymeData}. 

\begin{table}[ht]
\addtolength{\tabcolsep}{-2.pt} 
\centering
\vspace{-3pt}
\begin{tabular}{lclclclclclclc}
\hline
\hline
  & \# graphs & \# classes & Avg \# nodes & Avg \# edges  \\ \hline
  Proteins & 1113 & 2 & 39.06 & 72.82  \\
  BZR\_MD & 306 & 2 & 21.30 & 225.06  \\
  MSRC\_9 & 221 & 8 & 40.58 & 97.94  \\
  Enzymes & 600 & 6 & 32.63 & 62.14  \\
  \hline
\end{tabular}
\caption{Details on experimental datasets. }
\label{tab-data}
\vspace{-4pt}
\end{table}

%Comparison with    contracting  skecthing
%for contraction the mass of one node cannot go into more than one node, but for OT it can go into multiple nodes

Note that we work with connected, undirected graphs. The ``connected" assumption is unnecessary, but we focus on connected graphs as they are more relevant for most applications. The ``undirected" assumption is necessary because we define the variance-covariance matrix of $\mu^X$ as the pseudoinverse of the Laplacian. A variance-covariance matrix is always symmetric, so the Laplacian must be symmetric. 

\section{Further experiments}

This section contains further experiments to show the effectiveness and utility of COPT for visualization and summarization tasks.

\subsection{Graph projection to low dimensions}\begin{figure*}
\centering
\begin{minipage}[b]{.45\textwidth}
\begin{subfigure}
         \centering
         \includegraphics[trim={2cm 1.5cm 1.9cm 1.1cm},clip,width=\linewidth]{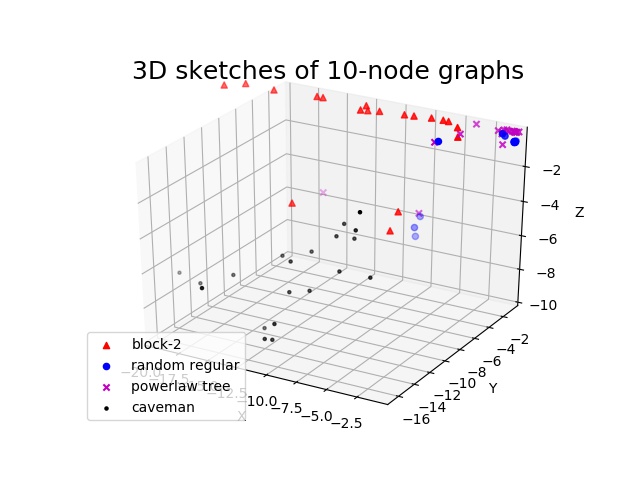}
         %\caption{$y=x$}
         %\label{fig:y equals x}
     \end{subfigure}
     \end{minipage}\qquad
\begin{minipage}[b]{.45\textwidth}     
     \begin{subfigure}
         \centering
         \includegraphics[trim={2cm 1.5cm 1.9cm 1.1cm},clip,width=\linewidth]{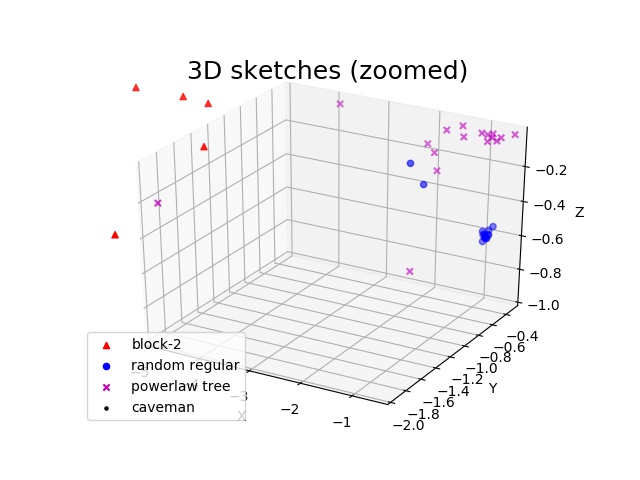}
         %\caption{$y=x$}
         %\label{fig:y equals x}
     \end{subfigure}
\end{minipage}\qquad
\caption{3D sketches of 80 ten-node graphs (left) and zoomed in version (right) reveal that sketches of the same class cluster together or lie in the same subspace.}\label{label-a} \label{fig:projection}
\end{figure*}

Projecting graphs to low dimensions can be an effective technique for visualizing relations amongst graphs, analogous to how projection techniques such as t-SNE \cite{t-SNE} can yield important insights in the distribution of data points.

Here a set of $80$ ten-node graphs consisting evenly of four classes of graphs \footnote{These classes were chosen as they admit meaningfully distinct properties on ten nodes, unlike for instance a four-community graph.}: 2-block \cite{blockModel}, random regular \cite{randRegGraph}, powerlaw tree, and caveman \cite{cavemanGraph}, are sketched down to graphs on three nodes, and the three entries in the strict upper triangular part of the sketched Laplacian are \emph{canonicalized} by sorting, and used as three-dimensional vector representations of the graphs.

As seen in Figure~\ref{fig:projection} and the zoomed in version, the 3D sketches of each class of graphs roughly follow the same trend, whether clustering together or lying in the same subspace. %Plot ~  demonstrates how COPT can be used for visualizing distributions of graphs.

\subsection{Graph retrieval}

%We test the quality of COPT sketching to \textit{equal} number of vertices by graph retrieval quality, as judged by accuracy of the class of the nearest neighbor. 

Following up on \S\ref{exp:retrieval}, this subsection provides a more detailed comparison between the retrieval quality of different pipelines, namely retrieval using [COPT sketches + GW] and [spectral projections + GW], when both COPT sketching and spectral projections are allowed to reduce the original graphs to a \textit{fixed} number of vertices. As seen in Figure~\ref{fig:search_acc}, retrieval using COPT sketched Laplacians outperforms spectral projections of original Laplacians by a margin in accuracy, where accuracy is determined by whether the top-retrieved candidate has the same class as the query. 

%This is generated  , when retrieving 600 and using 180 randomly generated 50-node graphs,
%It is often useful to reduce a set of graphs to the \textit{same number of vertices} rather than by the same compression factor (reducing to different numbers of vertices), such as for fast similarity measure between graphs using a simple $l^1$ or $l^2$ distance; 

%\begin{wrapfigure}{r}{0.4\textwidth} %[!htb]
\begin{figure}[!htb]
\centering
%\label{fig:search}
  \includegraphics[width=200pt]{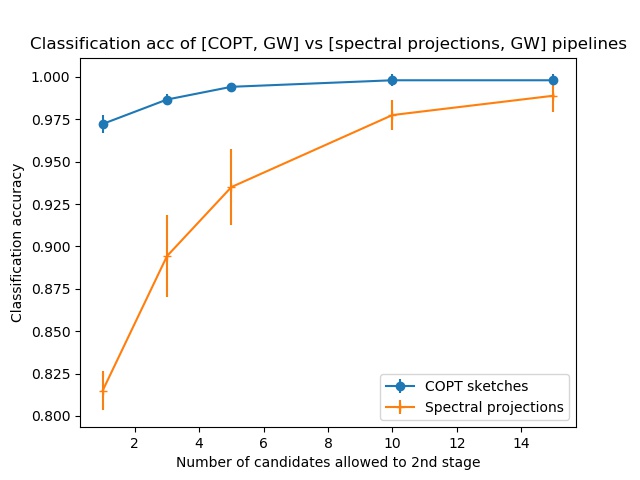}
  \vspace{-10pt}%\caption{Search acc}
  \caption{Comparing nearest-neighbor based classification between pipelines: [COPT sketches+GW] vs. [spectral projections+GW]. The x-axis indicates the number of candidates from the coarse method ($l^1$ retrieval on COPT sketches or $l^2$ retrieval on spectral projections) allowed to advance to the finer but more time-consuming method (GW).
  The query set and search dataset consist of 180 and 600 50-node graphs across six categories, respectively. 
  }\label{fig:search_acc}
\end{figure}
%\todo{remove bottom large border}

%%%%%%%%%%%%%%%%%%%%%%%
\subsection{Graph summarization examples}
We provide additional examples for graph summarization in Figure~\ref{fig:summary2}, in which graphs of varying initial structures are sketched down to a reduced number of nodes. As seen in these sketches, COPT is able to preserve the initial global structures of graphs, sketching structurally adjacent nodes in the original graph to the same node or nearby nodes in the reduced graph.

\begin{figure*}[h]
\centering
\begin{minipage}[b]{.4\textwidth}\centering
\begin{subfigure}
         \centering
         \includegraphics[trim={2cm 1.5cm 1.9cm 1.5cm},clip,width=.95\linewidth]{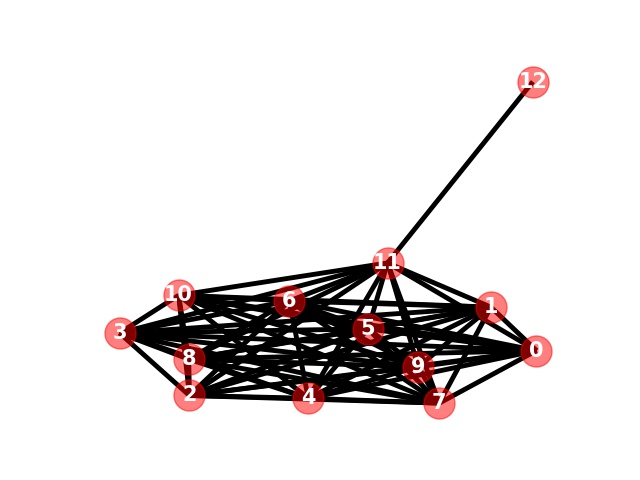}
       
     \end{subfigure}
     \begin{subfigure}
         \centering
         \includegraphics[trim={2cm 1.5cm 1.9cm 1.5cm},clip,width=.97\linewidth]{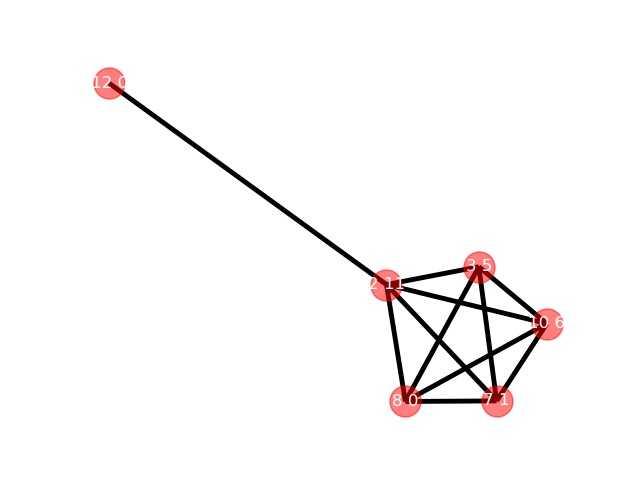}
     \end{subfigure}
\caption{Lollipop.}
\end{minipage}\qquad
\begin{minipage}[b]{.4\textwidth}\centering
\begin{subfigure}
         \centering
         \includegraphics[trim={2cm 1.5cm 1.6cm 1.5cm},clip,width=.95\linewidth]{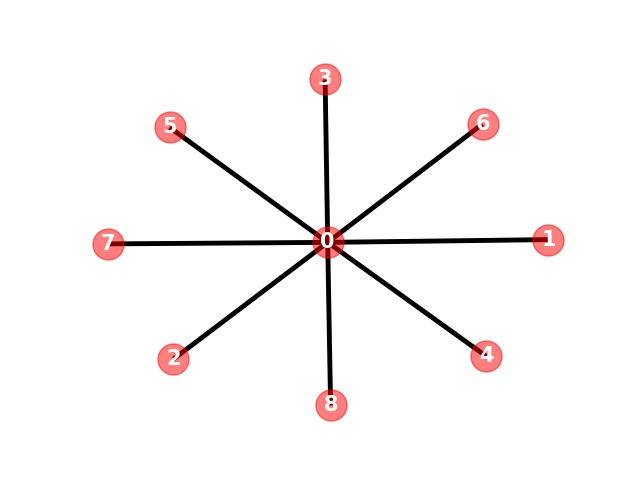}
     \end{subfigure}
     \begin{subfigure}
         \centering
         \includegraphics[trim={2cm 1.5cm 1.8cm 1.5cm},clip,width=.97\linewidth]{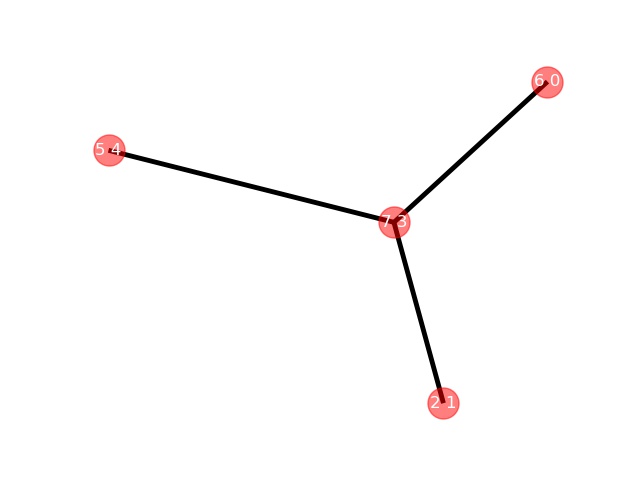}
         %\label{fig:y equals x}
     \end{subfigure}
\caption{Star.}
\end{minipage} 
\begin{minipage}[b]{.4\textwidth}\centering
\begin{subfigure}
         \centering
         \includegraphics[trim={2cm 1.5cm 1.9cm 1.5cm},clip,width=.95\linewidth]{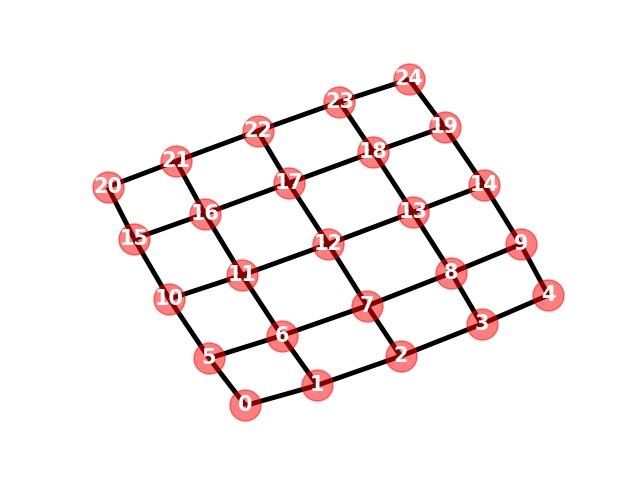}
     \end{subfigure} 
     \begin{subfigure}
         \centering
         \includegraphics[trim={2cm 1.5cm 1.9cm 1.5cm},clip,width=.97\linewidth]{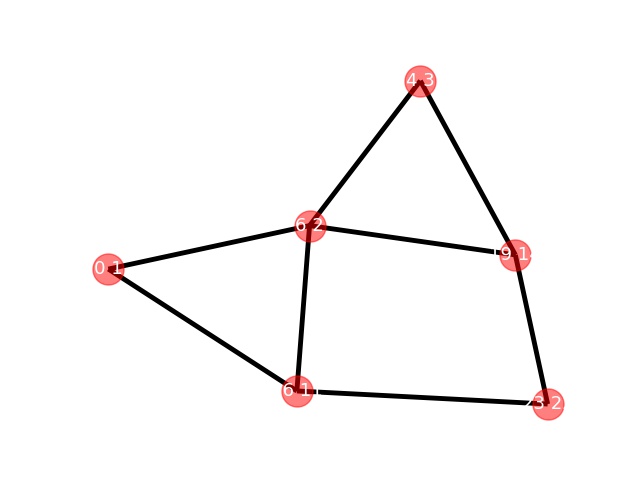}
     \end{subfigure}
     \caption{Grid.}
\end{minipage}
\begin{minipage}[b]{.4\textwidth}\centering
\begin{subfigure}
         \centering
         \includegraphics[trim={2cm 1.5cm 1.9cm 1.5cm},clip,width=.95\linewidth]{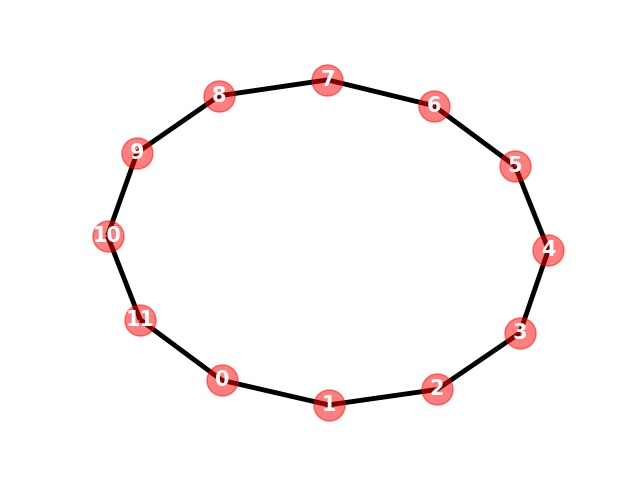}
     \end{subfigure} 
     \begin{subfigure}
         \centering
         \includegraphics[trim={2cm 1.5cm 1.9cm 1.5cm},clip,width=.97\linewidth]{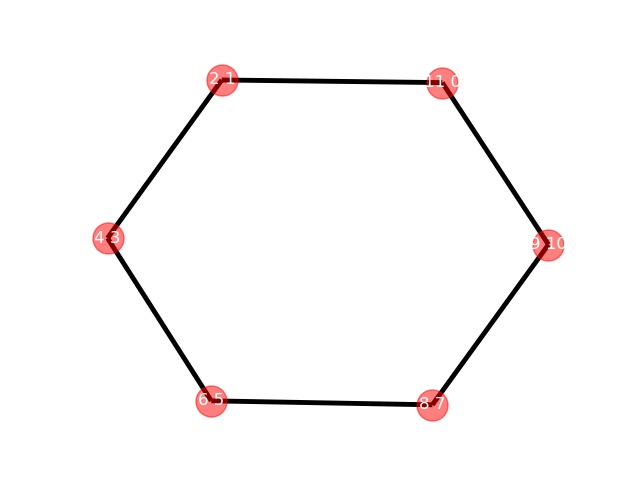}
     \end{subfigure}
     \caption{Ring.}
\end{minipage}\qquad
\caption{\textbf{Graph summarization:} orginal graphs (above) and their sketched graphs (below). The node labels on the sketched graphs are determined using the transport plan $P$, specifically the label on a node contains the two top-weighted nodes in the original graph whose mass flowed into that node. COPT sketches structurally similar nodes in the original graph to the same or nearby nodes.} \label{fig:summary2}
\end{figure*}

%\bibliography{main}
%\bibliographystyle{plain}

\end{document}